\pgfplotsset{compat=newest}
\DeclareMathOperator*{\argmax}{argmax}
\DeclareMathOperator*{\softmax}{softmax}
\DeclareMathOperator{\conv}{conv}
\DeclareMathOperator*{\argmin}{argmin}
\DeclareMathOperator{\dom}{dom}
\newcommand{\bbE}{\mathbb{E}}
\newcommand{\bbI}{\mathbb{I}}
\newcommand{\bbP}{\mathbb{P}}
\newcommand{\bbR}{\mathbb{R}}
\newcommand{\calA}{\mathcal{A}}
\newcommand{\calC}{\mathcal{C}}
\newcommand{\calL}{\mathcal{L}}
\newcommand{\calS}{\mathcal{S}}
\newcommand{\calZ}{\mathcal{Z}}
\newtheorem{theorem}{Theorem}
\newtheorem{proposition}[theorem]{Proposition}%
\newtheorem{prob}[theorem]{Problem}
\title{Structured Reinforcement Learning\\for Combinatorial Decision-Making}
\author{
  Heiko Hoppe\textsuperscript{1} \quad
  Léo Baty\textsuperscript{2} \quad
  Louis Bouvier\textsuperscript{2} \quad
  Axel Parmentier\textsuperscript{2} \quad
  Maximilian Schiffer\textsuperscript{1} \\
  \textsuperscript{1}Technical University of Munich \quad \textsuperscript{2}École des Ponts \\
  \texttt{\{heiko.hoppe,schiffer\}@tum.de} \\ \texttt{\{leo.baty,louis.bouvier,axel.parmentier\}@enpc.fr}
}
\begin{document}

\maketitle

\begin{abstract}
    Reinforcement learning (RL) is increasingly applied to real-world problems involving complex and structured decisions, such as routing, scheduling, and assortment planning. These settings challenge standard RL algorithms, which struggle to scale, generalize, and exploit structure in the presence of combinatorial action spaces. We propose \emph{Structured Reinforcement Learning} (SRL), a novel actor-critic paradigm that embeds combinatorial optimization-layers into the actor neural network. We enable end-to-end learning of the actor via Fenchel-Young losses and provide a geometric interpretation of SRL as a primal-dual algorithm in the dual of the moment polytope. Across six environments with exogenous and endogenous uncertainty, SRL matches or surpasses the performance of unstructured RL and imitation learning on static tasks and improves over these baselines by up to 92\% on dynamic problems, with improved stability and convergence speed.\footnote{Our code is available at \url{https://github.com/tumBAIS/Structured-RL}.}
\end{abstract}

\section{Introduction}\label{sec:intro}

Reinforcement learning has achieved remarkable progress during the last decade, expanding beyond its early success stories of Atari games and robotic control. Recently, increasing attention has been given to real-world industrial problems, such as vehicle routing, inventory planning, machine scheduling, and assortment optimization \citep[e.g.,][]{nazariReinforcementLearningSolving2018, koolAttentionLearnSolve2019, hottungNeuralLargeNeighborhood2022}.
Unlike traditional RL applications, these industrial problems often involve large-scale combinatorial decision-making, which challenges classical RL algorithms \citep[cf.,][]{hildebrandtOpportunitiesReinforcementLearning2023}.
In particular, existing algorithms struggle with: i) the exponential size of the action spaces, which renders action selection computationally intractable and hinders efficient exploration; and ii) leveraging the combinatorial structure of the action spaces using standard neural architectures, often resulting in poor generalization and unstable learning dynamics \citep[cf.,][]{yuanReinforcementLearningOptimization2022}.
From a methodological perspective, most industrial problems translate into combinatorial Markov Decision Processes (MDPs), i.e., MDPs with combinatorial action spaces, that remain the focus of this paper.

\begin{prob}[Combinatorial Markov Decision Process] \label{sec:prob_1}
We consider MDPs with states $s \in \mathcal{S}$, actions $a \in \mathcal{A}(s) \subset \mathbb{R}^{d(s)}$,  rewards $r$, and transition probabilities $\mathbb{P}(s', r \mid s, a)$ with next state $s'$. In combinatorial MDPs, $\mathcal{A}(s)$ is the set of feasible solutions of a combinatorial problem. We denote its convex hull as the moment polytope  $\mathcal{C}(s) := \operatorname{conv}(\mathcal{A}(s))$. 
As in stochastic optimization \citep{bertsekasStochasticOptimalControl1996}, we use a latent noise variable $\xi \in \Xi$ with probability $p(\xi \mid s, a)$, such that the transition to $(s', r)$ given $(s, a, \xi)$ is deterministic. We distinguish exogenous noise, where the distribution of $\xi$ does not depend on $a$, and endogenous noises, where the distribution of $\xi$ depends on $a$.

Given unknown $\bbP(s',r|s,a)$, we aim to find the reward-maximizing policy
\[
\bar \pi \in \arg\max_\pi \mathbb{E}_{\pi,\mathbb{P}} \left[ \sum_{t=0}^T \gamma^t r_t \right],
\]
over $[0, T]$, given a discount factor~$\gamma$.
We further define Q-values, satisfying the Bellman equation
\begin{equation*}
    Q^\pi(s_t,a_t)= \mathbb{E}_\mathbb{P} \left[r_t\right] + \gamma \: \mathbb{E}_\mathbb{P} \left[ \max_{\tilde a_{t+1}}Q^\pi(s_{t+1},\tilde a_{t+1}) \right],
\end{equation*}
using expectations with respect to $\mathbb{P}(s', r \mid s, a)$.
\end{prob}

The limitations of standard RL algorithms in solving such Combinatorial MDPs (C-MDPs) render them insufficient to solve many industrial problems.
To overcome these drawbacks, we propose \emph{Structured Reinforcement Learning} (SRL) -- a novel actor-critic RL paradigm that embeds combinatorial optimization (CO)-layers into neural actors,  enabling to exploit the underlying problem structure.

\paragraph{State of the Art}
Several research communities have studied learning and decision-making over structured spaces, including structured prediction \citep[e.g.,][]{nowozinStructuredLearningPrediction2011}, differentiable optimization layers \citep[e.g.,][]{amosOptNetDifferentiableOptimization2017}, and hierarchical reinforcement learning \citep[e.g.,][]{baconOptionCriticArchitecture2017}. While these works provide tools for embedding structure, they do not directly tackle end-to-end reinforcement learning in environments with combinatorial action spaces.

Prior approaches to solving C-MDPs include handcrafted decision rules \citep[e.g.,][]{liyanagePracticalInventoryControl2005, huberDatadrivenNewsvendorProblem2019} and predict-and-optimize algorithms \citep[e.g.,][]{alonso-moraPredictiveRoutingAutonomous2017, bertsimasPredictivePrescriptiveAnalytics2020}. While the former fail to model complex dynamics or constraints, the latter typically rely on imitation learning and separate prediction from optimization, which impairs performance in dynamic settings \citep[cf.,][]{endersHybridMultiagentDeep2023}. In contrast, CO-augmented Machine Learning (COAML)-pipelines integrate combinatorial optimization directly into model architectures, allowing end-to-end learning \citep[e.g.,][]{parmentierLearningApproximateIndustrial2022, dalleLearningCombinatorialOptimization2022}. Here, the key challenge is to differentiate through the CO-layer. Existing approaches include problem-specific relaxation schemes \citep[e.g.,][]{vlastelicaDifferentiationBlackboxCombinatorial2020}, and more general strategies, e.g., using Fenchel-Young losses \citep[e.g.,][]{blondelLearningFenchelYoungLosses2020, berthetLearningDifferentiablePerturbed2020}.
The latter allow continuous training over discrete structures and are increasingly used for differentiating COAML-pipelines \citep[e.g.,][]{dalleLearningCombinatorialOptimization2022}.
In C-MDPs, prior work either uses offline expert imitation \citep[e.g.,][]{batyCombinatorialOptimizationEnrichedMachine2024, jungelLearningbasedOnlineOptimization2024} or treats the CO-layer as an action mask in unstructured RL \citep[e.g.,][]{hoppeGlobalRewardsMultiAgent2024, woywood2025}. The former requires access to expert solutions, while the latter often leads to unstable gradients. Imitation approaches also suffer from insufficient exploration in multi-stage problems.

SRL builds on classical policy-gradient algorithms, e.g., REINFORCE, PPO, and SAC \citep[cf.,][]{williamsSimpleStatisticalGradientfollowing1992, haarnojaSoftActorCriticOffPolicy2018}, which perform well in conventional MDPs but are challenged by the size and structure of C-MDP action spaces \citep{hildebrandtOpportunitiesReinforcementLearning2023}.
Problem-specific neural network designs often lack generalizability across applications and struggle in dynamic contexts \citep[e.g.,][]{belloNeuralCombinatorialOptimization2017, daiLearningCombinatorialOptimization2017}, while neural improvement methods rely on existing initial solutions \citep[e.g.,][]{yuanReinforcementLearningOptimization2022, hottungNeuralLargeNeighborhood2022}.
Value-based methods often assume decomposable critics \citep[e.g.,][]{xuLargeScaleOrderDispatch2018, liangIntegratedReinforcementLearning2022}, limiting applicability in structured domains. Alternative approaches transform a task-specific Q-network into a mixed-integer linear program, which potentially increases solution time and limits the range of usable network architectures \citep[e.g.,][]{xu2025}. SRL is also related to offline RL, which frequently relies on imitation learning \citep[e.g.,][]{figueiredoprudencioSurveyOfflineReinforcement2024}, but differs by operating online and updating from on-policy targets.

\paragraph{Contribution}
To address the challenges outlined above, we propose a novel reinforcement learning paradigm for solving C-MDPs by integrating CO-layers into actor-critic architectures. Specifically, we introduce \emph{Structured Reinforcement Learning} (SRL), a new framework for the end-to-end training of COAML-pipelines using only collected experience. SRL replaces the neural actor with a combinatorial policy defined by a score-generating network and a CO-layer. To enable end-to-end learning despite the non-differentiability of the CO-layer, SRL combines stochastic perturbation and Fenchel-Young losses to construct smooth actor updates, enabling stable policy improvement. We further provide a geometric analysis that interprets SRL as a sampling-based primal-dual method in the dual of the moment polytope, connecting structured learning and RL from a theoretical lens.

We demonstrate the effectiveness of SRL across six representative environments, including both static and dynamic decision problems with exogenous and endogenous uncertainty. On static problems, SRL improves by up to 54\% on unstructured deep reinforcement learning baselines, and matches the performance of Structured Imitation Learning (SIL), despite requiring no expert supervision. On dynamic problems, SRL consistently outperforms SIL by up to 78\% and unstructured deep reinforcement learning baselines (i.e., PPO) by up to 92\%, while exhibiting lower variance and faster convergence across all settings.

\section{Methodology} \label{sec:methodology}
In the following, we introduce \emph{Structured Reinforcement Learning}, a novel actor-critic framework tailored to environments with combinatorial action spaces. The main rationale of SRL is the embedding of a CO-layer in the actor architecture, which turns the actor from a plain neural network into a COAML-pipeline. This allows us to map high-dimensional states to score vectors via neural networks, while leveraging combinatorial optimization to determine the best actions with respect to the score vectors. Establishing this algorithmic paradigm while ensuring end-to-end learning for the new actor requires additional changes, specifically i) identifying a loss function that allows to differentiate through the CO-layer when learning by experience, and ii) rethinking the policy evaluation scheme.

In the following, we first detail the foundations of the resulting new algorithmic paradigm. Afterward, we provide a geometric analysis that formalizes our learning scheme, which can be interpreted as a sampling-based primal-dual algorithm.

\subsection{Structured Reinforcement Learning} \label{subsec:SRL}
Figure~\ref{fig:srl_overview} sketches the elements of our SRL agent, which extends the standard actor-critic paradigm by integrating a CO-layer into the actor, replacing the conventional neural network policy representation. The black elements illustrate the actor pipeline during inference: a neural network maps the current state $s$ to a score vector $\theta$, which is then passed into the CO-layer $f$. This CO-layer selects a feasible action $a \in \mathcal{A}(s)$ by solving a combinatorial problem, ensuring that selected actions are both scalable in the dimension of $\calA(s)$ and valid with respect to the problem domain. The blue elements illustrate the architecture used during training: we sample perturbed versions of the score vector to generate candidate actions via the CO-layer. We then evaluate these actions using a critic network, and utilize a softmax-based aggregation to choose a \emph{target action} $\widehat{a}$ for the actor update. We train the actor end-to-end by minimizing a \emph{Fenchel-Young loss} between $\theta$ and $\widehat{a}$, which enables end-to-end backpropagation. The following details each of our algorithmic components.

\begin{figure}[b]
    \vspace{-0.3cm}
    \centering
    \includegraphics[width=\textwidth]{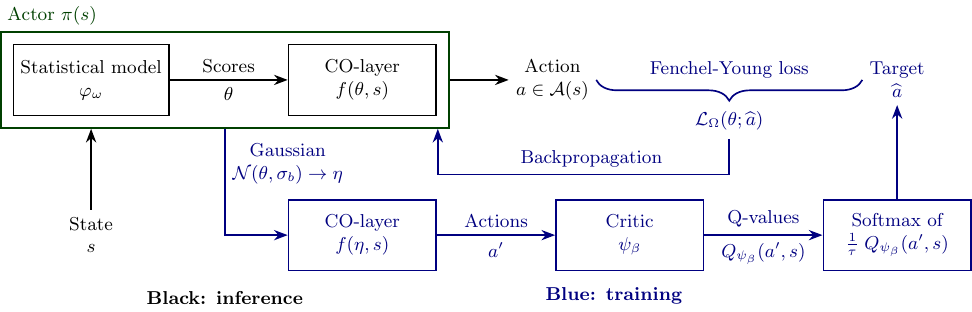}
    \caption{Overview of the Structured Reinforcement Learning algorithm.}
    \label{fig:srl_overview}
    \vspace{-0.3cm}
\end{figure}

\paragraph{Combinatorial Actor}
Consider a C-MDP as detailed in Problem~\ref{sec:prob_1}, where $\calA(s)$ denotes the feasible action set in state $s$. Deep RL algorithms typically use neural networks to represent the policy as a distribution over the action space. In combinatorial settings, a neural network may struggle to encode a distribution on the exponentially-large action space $\calA(s)$.
To overcome this challenge, we adopt a CO-augmented Machine Learning-pipeline as the actor architecture. As illustrated in Figure~\ref{fig:srl_overview}, this architecture combines a statistical model $\varphi_w$ with a combinatorial optimizer $f$. The statistical model $\varphi_w$, usually a neural network parameterized by weights $w$, observes contextual information provided by the C-MDP state $s$. Using this information, the model estimates a latent score vector $\theta$ with dimension $d$. The combinatorial optimizer $f$ uses $\theta$ as coefficients of a linear objective function and generates an action $a$ by solving the CO problem
\begin{equation*}
    f(\theta,s): a = \argmax_{\tilde a \in \mathcal{A}(s)} \: \langle \theta | \tilde a \rangle
\end{equation*}
given problem-specific constraints that define the feasible action set $\calA(s)$. Formally, the resulting combinatorial actor reads $f(\varphi_w(s),s)$, such that we define the respective actor policy as $\pi_{w}(\cdot | s) := \delta_{f(\varphi_w(s),s)}$. Intuitively, one can  interpret this policy as a Dirac distribution on the output of $f$.

In the COAML-pipeline, the statistical model $\varphi_w$ encodes contextual information, which enables generalizing across states, capturing variable dependencies, and anticipating C-MDP dynamics.
The CO-layer $f$ enforces combinatorial feasibility, facilitates a structured exploration of the action space, and improves scalability by mapping score vectors $\theta$ to potentially high-dimensional action spaces $\calA(s)$. 
COAML-pipelines have been used successfully to address combinatorial problems using imitation learning, highlighting their suitability for C-MDPs \citep[e.g.,][]{parmentierLearningApproximateIndustrial2022, batyCombinatorialOptimizationEnrichedMachine2024}.
For a detailed introduction to COAML-pipelines and a motivating example, we refer to Appendix~\ref{app:COAML-pipelines}.

\paragraph{End-to-end actor learning}
Training the actor parameters $w$ using gradient-based methods poses significant challenges. The CO-layer $f$ is piecewise constant with respect to the action space, resulting in uninformative (i.e., zero) derivatives almost everywhere. Geometrically, this behavior can be understood by considering the convex hull of the action space, ${\mathcal{C}(s) = \operatorname{conv}(\mathcal{A}(s))}$, which forms a polytope as depicted in Figure~\ref{fig:polytope_action}. The score vector~$\theta$ determines the direction of the objective function, and each vertex of $\mathcal{C}(s)$ corresponds to a \emph{normal cone}, defined as the set of all $\theta$ mapped to the same vertex $a$ by $f$. These cones partition the dual space of $\mathcal{C}(s)$, forming the \emph{normal fan} of $\mathcal{C}(s)$. When $\theta$ crosses a cone boundary, the mapping $f$ exhibits an abrupt change, assigning $\theta$ to a different action, thereby illustrating the piecewise constant nature of $f(\theta, s)$. Since $f(\theta,s)$ deterministically maps scores $\theta$ to actions $a$, it can be viewed as an action post-processing step within the environment. Under this perspective, one may interpret $\theta$ as the action space and parameterize a distribution over $\theta$ using $\varphi_w$, enabling the use of standard RL policy gradient methods via score-function estimators \citep{mohamedMonteCarloGradient2020}. However, treating $f$ as part of the environment effectively induces a piecewise constant and highly non-smooth reward function, which exacerbates gradient variance and leads to substantial difficulties in practice, e.g., by deteriorating or prohibiting convergence.

\begin{figure}
    \centering
    \includegraphics[width=0.9\textwidth]{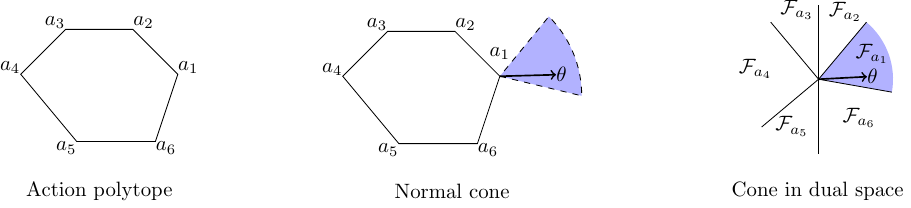}
    \caption{Left: action polytope $\calC(s)= \conv\big(\calA(s)\big)$. Middle: normal cone for which $f(\theta,s)=a_1$, right: normal cone $\mathcal{F}_{a_1}$ in dual space.}
    \vspace{-0.3cm}
    \label{fig:polytope_action}
\end{figure}

To address these limitations, we propose \emph{Structured Reinforcement Learning}, a primal-dual RL algorithm that employs Fenchel-Young losses to update the actor. Fenchel-Young losses define a surrogate objective that is convex in the output of the statistical model and allows for smooth gradient propagation. Differentiating this surrogate objective reduces to solving a convex optimization problem via stochastic gradient descent. We estimate these gradients using a pathwise estimator, which is known for its low variance~\citep{blondelElementsDifferentiableProgramming2024}.

Following the workflow illustrated in Figure~\ref{fig:srl_overview}, we outline SRL in Algorithm~\ref{alg:SRL}:
After collecting experience and sampling transitions from the replay buffer, we perturb the score vector $\theta$ using a Gaussian distribution to generate a set of perturbed vectors~$\eta$. We pass each $\eta$ through the CO-layer to obtain candidate actions $a'$, which are evaluated by the critic. We then compute a softmax-weighted target action~$\widehat{a}$ based on their Q-values. Finally, we update the actor by minimizing the Fenchel-Young loss between~$\theta$ and~$\widehat{a}$; and update the critic using standard temporal-difference errors.

\begin{algorithm}
    \footnotesize
    \caption{Structured Reinforcement Learning}
    \begin{algorithmic}
        \State \textbf{Initialize} actor with model $\varphi_w$, critic $\psi_\beta$ and target critic $\psi_{\:\overline \beta}$ networks
        \For{$e$ episodes}
            \State \textbf{Generate} trajectories, store and sample transitions $j$
            \For{$j$ transitions}
                \State \textbf{Perturb} $\theta_j=\varphi_w(s_j)$ using $Z \sim N(\theta_j, \sigma_b)$, sample $m$ $\eta_j$, solve $f(\eta_j,s_j)$ for each $\eta_j$
                \State \textbf{Calculate} target action $\widehat a_j = \left( \softmax_{a_j'} \frac{1}{\tau} \; Q_{\psi_\beta}(s_j,a_j') \right)$
                \State \textbf{Update} actor using $\calL_\Omega(\theta;\widehat a)$ \Comment{using a second perturbation}
                \State \textbf{Update} critic by one step of gradient descent using $J(\psi_\beta)=\left( Q_{\psi_\beta}(s_j, a_j) - y_j \right)^2$
            \EndFor
        \EndFor
    \end{algorithmic}
    \label{alg:SRL}
\end{algorithm}

Originally proposed for imitation learning \citep{blondelLearningFenchelYoungLosses2020, berthetLearningDifferentiablePerturbed2020}, the Fenchel-Young loss has become an established loss function for the end-to-end training of COAML-pipelines \citep[e.g.,][]{parmentierStructuredLearningBased2023, batyCombinatorialOptimizationEnrichedMachine2024}. We adopt it in SRL since it is convex in the output of the statistical model~$\theta = \varphi_w(s)$ and differentiable with respect to the latter, while leveraging the structure of the CO-layer $f$. The Fenchel-Young loss $\calL_\Omega(\theta;\widehat a)$ compares the difference between the objective values of the estimated action~$a$ under the parameterization~$\theta$ and the target action~$\widehat a$
\begin{equation}
    \calL_\Omega(\theta;\widehat a) = \max_{a \in \calA(s)}\theta^{\top} a - \theta^\top \widehat a.
\end{equation}
We aim to find a statistical model $\varphi_w$ that predicts $\theta$ to minimize the Fenchel-Young loss $\min_\theta \calL_\Omega(\theta;\widehat a)$. Due to the piecewise constant CO-layer $f$, the loss in this form is neither smooth or convex in $\theta$. To address this problem, we introduce a Gaussian perturbation $Z \sim \mathbb{N}(0, \varepsilon)$ such that the loss reads
\begin{equation}
    \calL_\Omega(\theta;\widehat a) = \mathbb{E}\left[ \max_{a \in \calA(s)} (\theta + Z)^{\top} a \right] - \theta^\top \widehat a.
\end{equation}
Using an alternative definition, given a regularization function~$\Omega : \bbR^d \rightarrow \bbR\cup\{+\infty\}$ and its Fenchel conjugate~$\Omega^*$, the Fenchel-Young loss $\calL_\Omega(\theta;\widehat a)$ generated by $\Omega$ is defined over $\dom(\Omega^*) \times \dom(\Omega) $ as
\begin{equation}\label{eq:FYloss_def}
    \calL_\Omega(\theta;\widehat a) := \Omega^*(\theta) + \Omega(\widehat a) - \langle \theta| \widehat a \rangle = \sup_{a \in \dom(\Omega)}\big(\langle \theta | a \rangle - \Omega(a)\big) - \big(\langle \theta | \widehat a \rangle - \Omega(\widehat a)\big).
\end{equation}
For a given $\theta \in \dom(\Omega^*)$, we introduce the regularized prediction as $\sup_{a \in \dom(\Omega)} \langle \theta | a \rangle - \Omega(a)$. The Fenchel-Young loss measures the non-optimality of $a \in \dom(\Omega)$ as a solution of the regularized prediction problem. It is nonnegative and convex in $\theta$. If in addition $\Omega$ is proper, convex, and lower semi-continuous, $\calL_\Omega$ reaches zero if and only if $a$ is a solution of the regularized prediction problem.

The Fenchel-Young loss requires a target action $\widehat a$, which SRL estimates online without relying on offline expert demonstrations. As visualized in Figure~\ref{fig:srl_scheme}, SRL explores the action space around the deterministic action $a$ using a perturbation, and leverages the critic to compute a softmax-weighted local target action $\widehat a$. To construct $\widehat a$, SRL perturbs the score vector $\theta$ using a Gaussian distribution $Z \sim N(\theta,\sigma_b)$ with standard deviation $\sigma_b$, and samples $m$ perturbed scores $\eta$. Each $\eta$ yields a candidate action $a'=f(\eta,s)$, which is evaluated by the critic $\psi_\beta$. We then compute
\begin{equation}\label{eq:target_action}
    \widehat a = \softmax_{a'} \left( \frac{1}{\tau} \; Q_{\psi_\beta} \right) = \sum_{a'} a' \frac{\exp{(\frac{1}{\tau} \cdot Q_{\psi_\beta}(s,a'))}}{\sum_{a'} \exp{(\frac{1}{\tau} \cdot Q_{\psi_\beta}(s,a'))}},
\end{equation}
with temperature parameter $\tau$, controlling the sharpness of the softmax. If we decrease $\tau$, $\widehat a$ approaches the greedy action $\argmax_{a'} \left(Q_{\psi_\beta}(s,a') \right)$. If we increase $\tau$, $\widehat a$ approaches the uniform average $\frac{1}{m} \sum_{a'}a'$. Note that $\widehat a \in \mathcal{A}(s)$ does not have to hold, as it is not executed in the environment, but only serves as a training signal for the Fenchel-Young loss.

\begin{figure}
    \centering
    \includegraphics[width=0.9\textwidth]{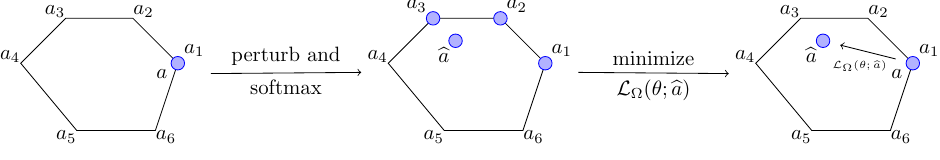}
    \caption{Schematic representation of SRL update step: unperturbed action $a$ (left), perturbed actions and target action $\widehat a$ (middle), Fenchel-Young loss $\calL_\Omega(\theta;\widehat a)$ (right).}
    \vspace{-0.3cm}
    \label{fig:srl_scheme}
\end{figure}

The softmax-based estimator for $\widehat a$ provides key benefits in structured action spaces. First, it promotes exploration by distributing credit across multiple actions. Second, it reduces critic overestimation bias via value averaging \citep[cf.,][]{vanhasseltDoubleQlearning2010, fujimotoAddressingFunctionApproximation2018}. Third, it avoids selecting edge actions, thereby enhancing stability and preventing premature convergence to suboptimal policies.

To ensure sufficient exploration of the state space $\mathcal{S}$, we introduce stochasticity into the forward pass during training: we perturb the score vector $\theta$ using Gaussian noise $Z \sim \mathcal{N}(\theta, \sigma_f)$ with exploration standard deviation $\sigma_f$, sample one perturbed vector $\eta$, and select an action $a = f(\eta, s)$. Overall, we use three Gaussian perturbations of $\theta$, distinguished by their standard deviations: $\sigma_f$ to facilitate exploration of $\mathcal{S}$ in the forward pass, $\sigma_b$ to ensure exploration of the combinatorial action space $\mathcal{A}(s)$ during target action selection, and $\varepsilon$ to regularize the CO-layer $f$ for the Fenchel-Young loss.

\paragraph{Critic architecture and stability}
SRL employs a critic network \( \psi_\beta \), parameterized by weights~\( \beta \), which estimates Q-values \( Q^\pi(s, a) = Q_{\psi_\beta}(s, a) \). We update the critic using standard temporal-difference (TD) learning: given observed transitions \( (s_t, a_t, r_t, s_{t+1}) \), the critic minimizes the TD loss \( (Q_{\psi_\beta}(s_t, a_t) - y_t)^2 \), with target value \( y_t = r_t + \gamma Q_{\psi_{\bar{\beta}}}(s_{t+1}, a_{t+1}) \). As common in RL, we update the target network weights $\overline \beta$ slowly to stabilize training, i.e., setting $\psi_{\:\overline \beta} \leftarrow \psi_\beta$ at the end of each episode.
To mitigate the critic's overestimation bias, which is especially prevalent in large combinatorial spaces, we adopt double Q-learning techniques in complicated environments~\citep[cf.,][]{vanhasseltDoubleQlearning2010, fujimotoAddressingFunctionApproximation2018}, computing Q-values using the average of two critics.

While SRL follows a standard actor-critic architecture, combinatorial actions introduce unique challenges. Notably, SRL does not require the Q-function to be decomposable in the same dimension as the score vector $\theta$, unlike methods relying on linear or factored critics. This flexibility supports complex environments, e.g., industrial settings, but prevents direct optimization over actions. In particular, computing $\argmax_a Q_{\psi_\beta}(s, a)$ is generally infeasible for large combinatorial $\mathcal{A}(s)$, as it requires solving a hard optimization problem for each evaluation during training. This complexity increase further precludes the direct use of primal-dual techniques~\citep[cf.,][]{bouvierPrimaldualAlgorithmContextual2025}, which require a critic-dependent optimization over actions. Instead, SRL adopts a sampling-based approach, which enables a tractable estimate of the best update direction, without requiring explicit optimization over the critic.

\subsection{Geometrical insights} \label{subsec:theory}
We focus our geometrical discussion on the static case (\( T = 1 \)) of contextual stochastic combinatorial optimization. While this may appear as a simplification compared to the general multi-stage case (\( T > 1 \)), it serves two purposes. First, it allows us to establish direct connections to the well-studied class of contextual stochastic combinatorial optimization problems \citep{sadanaSurveyContextualOptimization2025} and recent findings on primal-dual optimization schemes in this context \citep{bouvierPrimaldualAlgorithmContextual2025}. Second, when learning a critic function via RL, the multi-stage decision problem effectively reduces to a single-stage problem with respect to the critic, since the critic approximates the expected cumulative return from any given state or state-action pair. Studying the static case thus not only clarifies the structure of our problem but also aligns with how value functions are typically learned and analyzed.

In this setting, we introduce a latent noise variable \citep{bertsekasStochasticOptimalControl1996} \( \xi \in \Xi \), with conditional distribution \( p(\xi \mid s, a) \), such that the transition to \( (s', r) \) given \( (s, a, \xi) \) is deterministic. We distinguish between \emph{exogenous noise}, where the distribution of \( \xi \) is independent of the action \( a \), and \emph{endogenous noise}, where \( p(\xi \mid s, a) \) explicitly depends on \( a \). The objective of finding an optimal policy can then be formulated~as
\begin{equation}\label{eq:single_stage_mdp}
\bar \pi \in \argmax_\pi \bbE_{\pi, \bbP}\big[r(s,a,\xi)\big].
\end{equation}
In this context, \citet{bouvierPrimaldualAlgorithmContextual2025} introduced a primal-dual algorithm for empirical risk minimization, making connections with mirror descent (\citep{nemirovsky1983wiley, bubeckConvexOptimizationAlgorithms2015}). This algorithm has nice theoretical and practical properties. However, it relies on the following:
i) A combinatorial optimizer to solve $\max_{a \in \mathcal{A}(s)} \langle \theta | a \rangle$ for any $\theta \in \bbR^{d(s)}$. ii) A reward $r$ based on an exogenous noise variable~$\xi$. This random variable~$\xi$ is not observed when choosing the action~$a \in \mathcal{A}(s)$, but a posteriori.
iii) A combinatorial optimizer to solve $\max_{a \in \mathcal{A}(s)} r(s, a, \xi) + \langle \theta | a \rangle$ when $\xi$ is observed.

In an RL setting, point i) typically holds, but points ii) and iii) do not stand. Indeed, the transition probability~$\mathbb{P}$ is general, and the noise may be endogenous and not observed at all. Besides, the reward function is typically a black-box. One could think of replacing the black-box objective function of Equation~\eqref{eq:single_stage_mdp} by a learned critic $\max_{\pi}\bbE_{\pi} \big[Q_{\psi_\beta}(s, a)\big]$. However, in combinatorial optimization, the non-linearity of the critic function makes its optimization with respect to $a$ intractable. The key idea of Algorithm~\ref{alg:SRL} is to sample a few atoms~$(a_i)_{i \in [m]}$, and compute an expectation (softmax) involving a critic function~$Q_{\psi_\beta}$. In Proposition~\ref{prop:relation_primal_dual_static} below, we highlight that in the static case, and with a fixed critic, this approach can be seen as a primal-dual algorithm, leveraging an additional sampling step. We show that it leads to tractable updates in our RL setting. To formalize this, we need to introduce a few definitions and background on optimization over the distribution simplex. 

\paragraph{Policies as solutions of regularized optimization over the distribution simplex}\label{subsec:policy_structured_simplex}
For a given state~$s \in \mathcal{S}$, let $\Delta^{\mathcal{A}(s)}$ be the probability simplex over~$\mathcal{A}(s)$. We recall that $\mathcal{C}(s)$ is the convex hull of the action space, also called moment polytope. Let ${A(s) = (a)_{a \in \mathcal{A}(s)}}$ be the wide matrix having one column per action. We introduce ${\Omega_{\Delta^{\calA(s)}}: \Delta^{\mathcal{A}(s)} \to \mathbb{R} \cup \{+ \infty\}}$ a regularization function, such that its restriction to the affine hull of $\Delta^{\mathcal{A}(s)}$ is Legendre-type \citep{Rockafellar+1970}. From this regularization over the simplex, we define a regularization over the moment polytope~$\mathcal{C}(s)$ as follows. Let $\mu \in \mathcal{C}(s)$, $\Omega_{\mathcal{C}(s)}(\mu) := \min_{q \in \Delta^{\mathcal{A}(s)} : A(s)q = \mu}\Omega_{\Delta^{\calA(s)}}(q)$.
Every function $c(\cdot)$ on the combinatorial (exponentially large but finite) space~$\mathcal{A}(s)$ can be seen as a long vector~${\gamma = \big(c(a)\big)_a \in \bbR^{\mathcal{A}(s)}}$. The distribution simplex~$\Delta^{\mathcal{A}(s)}$ is the dual of this score space, and we can use $\Omega_{\Delta^{\calA(s)}}$ to create mappings between them. More precisely, a policy maps a state $s \in \calS$ to a distribution over the corresponding combinatorial action set $q \in \Delta^{\calA(s)}$. 
To define such policies, we map a state $s$ to a direction vector $\theta = \varphi_w(s)$; then lift it to the score space $\gamma_\theta = A(s)^\top \theta = (\langle \theta | a\rangle)_{a \in \calA(s)}$; and finally to a distribution~${q = \nabla \Omega_{\Delta^\calA(s)}^*(\gamma_\theta)}$.
Our regularized actor policy parameterized by $w$ is defined as
\begin{equation}\label{eq:policy_simplex}
    \pi_w(\cdot|s) = \argmax_{q \in \Delta^{\calA(s)}} \{ \langle \underbrace{A(s)^\top \overbrace{\varphi_w(s)}^{\theta \in \bbR^{d(s)}}}_{\gamma_\theta \in \bbR^{\calA(s)}} |q \rangle - \Omega_{\Delta^{\calA(s)}}(q) \} = \nabla \Omega_{\Delta^{\calA(s)}}^*\big(A(s)^\top \varphi_w(s)\big).
\end{equation}
In Equation~\eqref{eq:policy_simplex}, we use the results of convex duality to write the $\argmax$ as a gradient of the Fenchel conjugate of $\Omega_{\Delta^{\calA(s)}}$. The learning problem is to find the  $w$ that maximizes $\bbE_{\pi_w,\bbP}\big[r(s,a,\xi\big]$.
In practice, during inference, we do not regularize (see Section~\ref{subsec:SRL}), thus obtaining a Dirac policy.

\paragraph{A sampling-based primal-dual algorithm for the actor update}
We consider a fixed state~$s \in \calS$ leading to a fixed action space $\calA$, and thus drop the latter from the notation of spaces and regularization functions, and omit the neural network from the policy defined in Equation~\eqref{eq:policy_simplex}. Given a fixed critic function~$Q_{\psi_\beta}$, we introduce the score vector~${\gamma_\beta = \big(Q_{\psi_\beta}(a)\big)_{a \in \mathcal{A}} \in \mathbb{R}^{\mathcal{A}}}$. 
\citet{bouvierPrimaldualAlgorithmContextual2025} introduce the following algorithm and show its convergence in a restricted setting.
\begin{subequations}\label{eq:primal_dual_theta}
    \begin{align}
    \mu^{(t+1)} & = A\nabla \Omega_{\Delta}^*\big(A^\top \theta^{(t)} + \frac{1}{\tau} \gamma_\beta\big),\label{eq:md_primal}\\
    \theta^{(t+1)} &\in \partial \Omega_{\mathcal{C}}(\mu^{(t+1)})\label{eq:md_dual}.
\end{align}
\end{subequations}
Here, $\Omega^*$ is the Fenchel-conjugate of $\Omega$, $\nabla \Omega^*$ the gradient of $\Omega^*$, and $\partial \Omega$ the sub-differential of $\Omega$.
The following proposition, proved in Appendix~\ref{app:proofs}, shows that the static version of Algorithm~\ref{alg:SRL} can be seen as a variant of the primal-dual algorithm presented in \citet{bouvierPrimaldualAlgorithmContextual2025}, enhancing it with an additional sampling step.

\begin{proposition}\label{prop:relation_primal_dual_static} 

The actor update in the static version of Algorithm~\ref{alg:SRL} can be written as
\begin{subequations}\label{eq:primal_dual_theta_MC}
    \begin{align}
    (a_i^{(t+\frac{1}{2})})_{i \in [m]} &\sim_{\text{i.d.}} \nabla \Omega_{\varepsilon, \Delta}^*(A^\top \theta^{(t)}), \label{eq:primal_dual_sampling}\\
    \hat q_{m}^{(t+\frac{1}{2})} &= \frac{1}{m} \sum_{i=1}^m \delta_{a_i^{(t + \frac{1}{2})}}, \label{eq:primal_dual_empirical_dist}\\
    \gamma_m^{(t+\frac{1}{2})} &\in \partial \Omega_{\Delta}(\hat q_{m}^{(t+\frac{1}{2})}), \label{eq:primal_dual_score} \\
    \mu^{(t+1)} & = A\nabla \Omega_{\Delta}^*\big(\gamma_m^{(t+\frac{1}{2})} + \frac{1}{\tau} \gamma_\beta\big), \label{eq:primal_dual_moment}\\
    \theta^{(t+1)} &\in \partial \Omega_{\varepsilon, \mathcal{C}}(\mu^{(t+1)}), \label{eq:primal_dual_dual}
    \end{align}
\end{subequations}
    where $\delta_a$ is the Dirac distribution on $a$, $\Omega_\Delta$ is the negentropy, and $\Omega_{\varepsilon, \Delta}$ is the conjugate of the sparse perturbation, both detailed in Appendix~\ref{app:proofs}. Since~$\hat q_{m}^{(t+\frac{1}{2})}$ is sparse by design, we discuss the definition of gradients and sub-gradients at the (relative) boundary of the domains in Appendix~\ref{app:proofs}.
\end{proposition}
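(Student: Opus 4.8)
The plan is to match the static actor update of Algorithm~\ref{alg:SRL} line by line to the five steps of Equation~\eqref{eq:primal_dual_theta_MC}, translating between the perturbed-optimizer / Fenchel-Young language of Section~\ref{subsec:SRL} and the simplex-regularization language of Equation~\eqref{eq:policy_simplex}. Three identities carry the argument: (i) the law of a perturbed linear maximizer over $\calA$ is the simplex point $\nabla\Omega_{\varepsilon,\Delta}^*(A^\top\theta)$, where $\Omega_{\varepsilon,\Delta}$ is conjugate to the smoothed maximum $\theta\mapsto\mathbb{E}_Z[\max_{a\in\calA}\langle\theta+Z\,|\,a\rangle]$ (detailed in Appendix~\ref{app:proofs}); (ii) for the negentropy $\Omega_\Delta$ the conjugate gradient $\nabla\Omega_\Delta^*$ is exactly the $\softmax$; and (iii) the Fenchel-Young optimality condition $\theta\in\partial\Omega(\mu)\iff\mu=\nabla\Omega^*(\theta)$. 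Throughout I fix the critic scores $\gamma_\beta=(Q_{\psi_\beta}(a))_{a\in\calA}$ and the iterate $\theta^{(t)}$, and show that the three operations of the update -- perturb-and-solve, softmax target, Fenchel-Young step -- reproduce Equations~\eqref{eq:primal_dual_sampling}--\eqref{eq:primal_dual_dual}.

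First I would handle the forward sampling. Since $f(\cdot,s)$ is a linear maximizer over $\calA$, each candidate $a'=f(\theta^{(t)}+Z,s)$ with $Z\sim N(0,\sigma_b)$ is the a.s.-unique vertex selected by the perturbed objective, so by identity~(i) the draws are i.i.d. from $\nabla\Omega_{\varepsilon,\Delta}^*(A^\top\theta^{(t)})$, giving Equation~\eqref{eq:primal_dual_sampling}. Collecting them into $\hat q_m^{(t+\frac12)}=\frac1m\sum_i\delta_{a_i}$ is Equation~\eqref{eq:primal_dual_empirical_dist}, and lifting this empirical law to a negentropy subgradient $\gamma_m^{(t+\frac12)}\in\partial\Omega_\Delta(\hat q_m^{(t+\frac12)})$ is Equation~\eqref{eq:primal_dual_score}. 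The detour through the simplex is what makes the scheme tractable: with the canonical representative $\gamma_m=\log\hat q_m$ on the support, $\gamma_m$ records the empirical log-probabilities and plays exactly the role that the exact lifted direction $A^\top\theta^{(t)}$ plays in the non-sampled scheme~\eqref{eq:primal_dual_theta}, with $\hat q_m=\nabla\Omega_\Delta^*(\gamma_m)$.

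Next I would identify the softmax target with the primal step. By identity~(ii), $\nabla\Omega_\Delta^*(\gamma_m+\frac1\tau\gamma_\beta)_a\propto\exp(\gamma_m(a)+\frac1\tau Q_{\psi_\beta}(a))=\hat q_m(a)\exp(\frac1\tau Q_{\psi_\beta}(a))$, a distribution supported on the sampled atoms; off the support $\gamma_m(a)=-\infty$ forces zero weight, so the composition is single-valued there irrespective of the subgradient chosen. Left-multiplying by $A$ and noting that an atom drawn $k$ times carries mass $k/m$ recovers precisely the count-weighted softmax average of Equation~\eqref{eq:target_action}, i.e. the target action $\widehat a=\mu^{(t+1)}$ of Equation~\eqref{eq:primal_dual_moment}. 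Finally, the actor minimizes the perturbed Fenchel-Young loss $\calL_{\Omega_\varepsilon}(\theta;\mu^{(t+1)})=\Omega_{\varepsilon,\mathcal{C}}^*(\theta)+\Omega_{\varepsilon,\mathcal{C}}(\mu^{(t+1)})-\langle\theta\,|\,\mu^{(t+1)}\rangle$ of Equation~\eqref{eq:FYloss_def}, whose stationarity $\nabla\Omega_{\varepsilon,\mathcal{C}}^*(\theta^{(t+1)})=\mu^{(t+1)}$ is, by identity~(iii), the dual step $\theta^{(t+1)}\in\partial\Omega_{\varepsilon,\mathcal{C}}(\mu^{(t+1)})$ of Equation~\eqref{eq:primal_dual_dual}.

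The hard part will be the sparsity flagged in the statement. The empirical law $\hat q_m$ sits on the relative boundary of $\Delta$, where the negentropy has infinite slope and $\partial\Omega_\Delta(\hat q_m)$ is classically empty, so both Equation~\eqref{eq:primal_dual_score} and the dual point in Equation~\eqref{eq:primal_dual_dual} need a careful definition. I would resolve this by restricting to the face $\calF\subseteq\Delta$ of distributions supported on $\operatorname{supp}(\hat q_m)$: using the Legendre-type assumption, $\Omega_\Delta$ restricted to $\aff\calF$ is differentiable on $\relint\calF$, so $\gamma_m=\log\hat q_m$ is well-defined up to the additive constant annihilated by simplex normalization, and adding the finite vector $\frac1\tau\gamma_\beta$ keeps the $\softmax$ supported inside $\calF$. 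The decisive check is that every downstream quantity depends only on the restriction of $\gamma_m$ to $\operatorname{supp}(\hat q_m)$ and on $\mu^{(t+1)}$, which lies in $\relint\calC$ because the perturbed regularization $\Omega_{\varepsilon,\mathcal{C}}$ is smooth; hence the boundary ambiguity never propagates to $\theta^{(t+1)}$. This is the technical content I would defer to Appendix~\ref{app:proofs}.
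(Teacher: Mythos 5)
Your proposal is correct and follows essentially the same route as the paper's proof: a line-by-line matching of the five updates using the a.s.-uniqueness of the perturbed linear maximizer for the sampling step, the explicit negentropy subgradient $\gamma_m = \ln \hat q_m$ (up to the additive constant $\alpha$ the paper allows) with the $-\infty$ extension off the support, the exponential-family form of $\nabla\Omega_\Delta^*$ to recover the count-weighted softmax target, and Fenchel duality to identify the dual step with minimizing the Fenchel-Young loss. Your face-restriction discussion of the boundary issue matches the paper's extension of $\partial\Omega_\Delta$ and $\nabla\Omega_\Delta^*$ to sparse distributions in Appendix~\ref{app:proofs}, so no substantive difference remains.
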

In Equations~\eqref{eq:primal_dual_theta_MC}, for convenience in the implementation, we involve two distinct regularization functions on the distribution simplex $\Delta^\calA$, detailed in Appendix~\ref{app:proofs}.
The main difference between Equations~\eqref{eq:primal_dual_theta}
and Equations~\eqref{eq:primal_dual_theta_MC} is the sampling step for the primal update. Recall that involving the critic in Equation~\eqref{eq:md_primal} may be intractable. Indeed, the critic function may be highly nonlinear, and the resulting nonlinear combinatorial optimization problem may be intractable. Instead, Equation~\eqref{eq:primal_dual_theta_MC} is based on a sampling step, which only requires $m$ evaluations of the critic,  which is tractable even when optimizing it with respect to $a \in \calA(s)$ is not. In both primal-dual algorithms, the dual update (of $\theta$) is equivalent to solving a convex optimization problem, precisely minimizing a Fenchel-Young loss generated by $\Omega_\calC$. It is very convenient in practice, since the weights~$w$ of our actor policy can be updated via stochastic gradient descent, although it relies on a piecewise constant CO-layer $f$.

\section{Numerical studies} \label{sec:experiments}

\paragraph{Studied environments}
We evaluate SRL across six environments that reflect industrial applications with large combinatorial action spaces. Appendix~\ref{app:experiments} describes the experimental setup, and Appendix~\ref{app:environments} details the environments.
We first consider three static environments -- common industrial benchmarks from \citet{dalleLearningCombinatorialOptimization2022} -- namely, a Warcraft Shortest Paths Problem, a Single Machine Scheduling Problem, and a Stochastic Vehicle Scheduling Problem. As discussed in Appendix~\ref{app:results}, SRL matches the performance of SIL, while relying solely on access to a (black-box) cost function and without requiring the expert knowledge necessary for SIL. These results underline the versatility and broader potential of SRL, even though we designed it primarily for dynamic settings. We now discuss our findings for the dynamic environments in more detail.

The dynamic environments model online decision-making in C-MDPs. We consider: i) a Dynamic Vehicle Scheduling Problem (DVSP), based on the Dynamic Vehicle Routing Problem introduced by \citet{koolEUROMeetsNeurIPS2022, batyCombinatorialOptimizationEnrichedMachine2024}. This problem with exogenous uncertainty requires serving spatio-temporally distributed requests that are revealed over time. The goal is to find cost-minimizing routes while fulfilling all requests. ii) A Dynamic Assortment Problem (DAP), adapted from \citet{dulac-arnoldDeepReinforcementLearning2016} and\citet{chenDynamicAssortmentOptimization2020}. This problem with endogenous uncertainty involves selecting item assortments that are shown to customers, whose choices follow a multinomial logit model. Item features evolve based on past decisions, the goal is overall revenue maximization. iii) A Gridworld Shortest Paths Problem (GSPP), inspired by gridworld and robotic control tasks \citep{chandakLearningActionRepresentations2019, zhangGeneratingAdjacencyConstrainedSubgoals2020}. This problem with endogenous uncertainty requires the agent to find cost-minimizing paths to targets, which move to new locations when being reached. The costs of paths are influenced by prior paths.

\paragraph{Experimental setup}
We compare SRL against two baselines: SIL, a structured imitation learning approach, and Proximal Policy Optimization (PPO), an unstructured RL algorithm. All algorithms use identical COAML-pipelines; SRL and PPO also share the same critic architectures to ensure a fair comparison. We select SIL due to its methodological proximity to SRL and its strong performance in combinatorial settings~\citep{batyCombinatorialOptimizationEnrichedMachine2024, jungelLearningbasedOnlineOptimization2024}. We include PPO for its stability and compatibility with our pipeline architecture~\citep{schulmanProximalPolicyOptimization2017}. Alternatives such as Soft Actor-Critic would require substantial modifications to the actor and critic architectures, as would neural CO-methods \citep[e.g.,][]{belloNeuralCombinatorialOptimization2017} and Q-value-based optimization \citep[e.g.,][]{xu2025}. To keep this analysis concise, we concentrate on comparing COAML-pipelines using different learning paradigms and leave an in-depth comparison of architectures to future work.
As performance references, we include two additional baselines: an expert policy and a greedy policy. In the static DVSP, the expert has access to the complete problem instance and thus represents the offline optimum. In contrast, in the dynamic DAP and GSPP, traceability constraints limit information access, and the expert corresponds to the best possible online policy, given the sequential nature of the decision process.
We train all algorithms using the same number of episodes, employing environment-specific train/validation/test splits. We tune hyperparameters per algorithm and environment, using the PPO-optimized episode numbers consistently across methods. Each algorithm is retrained using ten random seeds. Appendix~\ref{app:experiments} provides further details on the experimental setup and baselines.

\paragraph{Numerical results}
\begin{figure}[b]
    \centering
    \includegraphics[width=\textwidth]{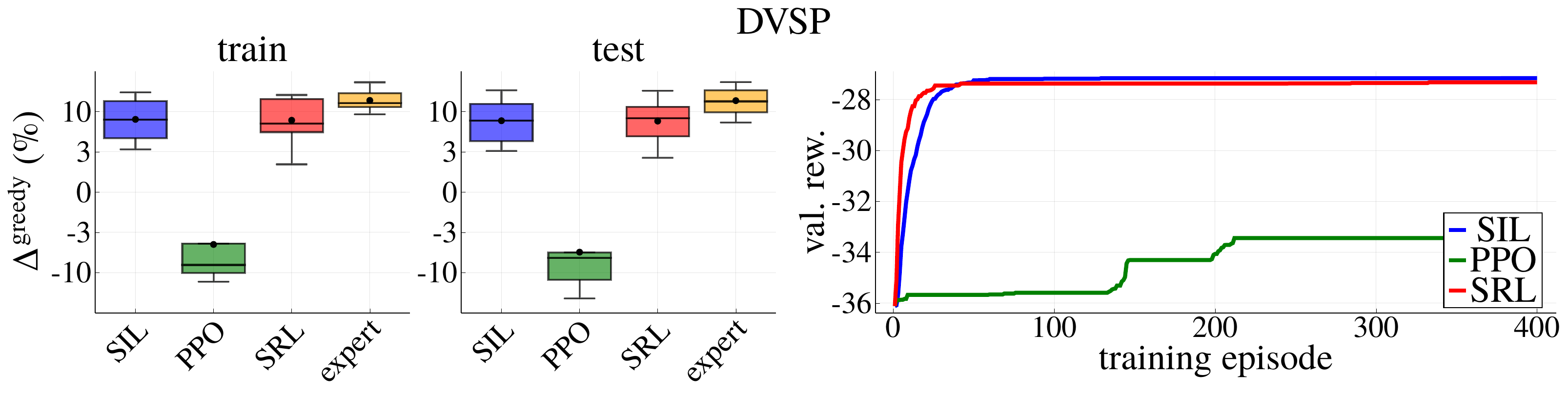}
    \caption{DVSP results. Left: final train and test-performance compared to greedy ($\Delta^{\text{greedy}}$); right: validation performance during training; averaged over 10 random model initializations.}
    \label{fig:dvsp_results}
\end{figure}

We present results for the dynamic environments in Figure~\ref{fig:dvsp_results} and Figure~\ref{fig:dynamic_results}. We display the performance of final models on the train and test-datasets to measure algorithmic performance, and the development of validation rewards during training to highlight convergence behavior. While SRL performs comparably to SIL in the DVSP, it outperforms SIL in the DAP by 8\% and in the GSPP by 78\%, even surpassing the online optimum by 79\% in the latter. In the DAP, the online optimum remains above the performance of SRL and SIL. These results highlight two key limitations of imitation learning: i) its performance is bounded by that of the expert policy; and ii) it lacks exploration to learn policies that enable escape from suboptimal states. In contrast, PPO consistently underperforms across all environments, struggling to reach greedy policies -- SRL outperforms it by 16\% in the DVSP, 77\% in the DAP, and 92\% in the GSPP. This poor performance highlights the challenges faced by unstructured RL in combinatorial action spaces. Overall, these performance gains show the superiority of SRL over SIL and PPO.

We observe notable differences in convergence speed. PPO converges slowest, requiring approximately 200 episodes in the DVSP and 160 in the GSPP to reach a performance plateau. In contrast, SRL and SIL converge at similar rates in the DVSP, while SIL converges approximately 150 episodes earlier in the DAP and 10 episodes earlier in the GSPP. This delay for SRL is expected, as it learns purely from interaction, without access to expert demonstrations.

Stability metrics in Table~\ref{tab:dynamic_results} further explain these trends. PPO shows the highest variance across all environments -- up to 80$\times$ higher than SRL and 40$\times$ higher than SIL -- reflecting known limitations of unstructured RL in combinatorial settings~\citep[e.g.,][]{endersHybridMultiagentDeep2023, hoppeGlobalRewardsMultiAgent2024}. In contrast, SRL and SIL exhibit consistently low variance, underscoring the robustness of structured approaches.

\paragraph{Discussion} This robustness comes at a computational cost. SRL requires about 30 minutes of training per environment, compared to shorter runtimes for SIL and PPO. The difference arises from CO-layer usage: PPO invokes it only twice per update, versus 20$\times$ in SIL and up to 61$\times$ in SRL. Runtime also varies with CO-layer complexity -- e.g., GSPP has a simple layer, leading to similar runtimes, while the more complex layer in DVSP increases SRL’s runtime. The DAP runtime is further impacted by an expensive simulation and the use of two Q-networks. Although early stopping could mitigate this, the findings highlight a core limitation: SRL’s computational cost scales with CO-layer complexity.

Overall, the observations for our dynamic experiments align with our static experiments (Appendix~\ref{app:results}). SRL consistently matches SIL in performance and convergence, while PPO underperforms across all metrics. Runtimes follow the same pattern, increasing with CO-layer complexity.

In summary, our results yield three takeaways for real-world deployments:
i) unstructured RL lacks the stability required for practical use.
ii) SIL is limited to settings with simple dynamics and access to expert demonstrations.
iii) Given the expected model and layer complexity in real-world settings, SRL offers a scalable and effective alternative solution approach at the price of a (reasonable) computational overhead when comparing it to SIL.

\begin{figure}[t]
    \begin{minipage}[t]{0.49\textwidth}
        \includegraphics[width=\textwidth]{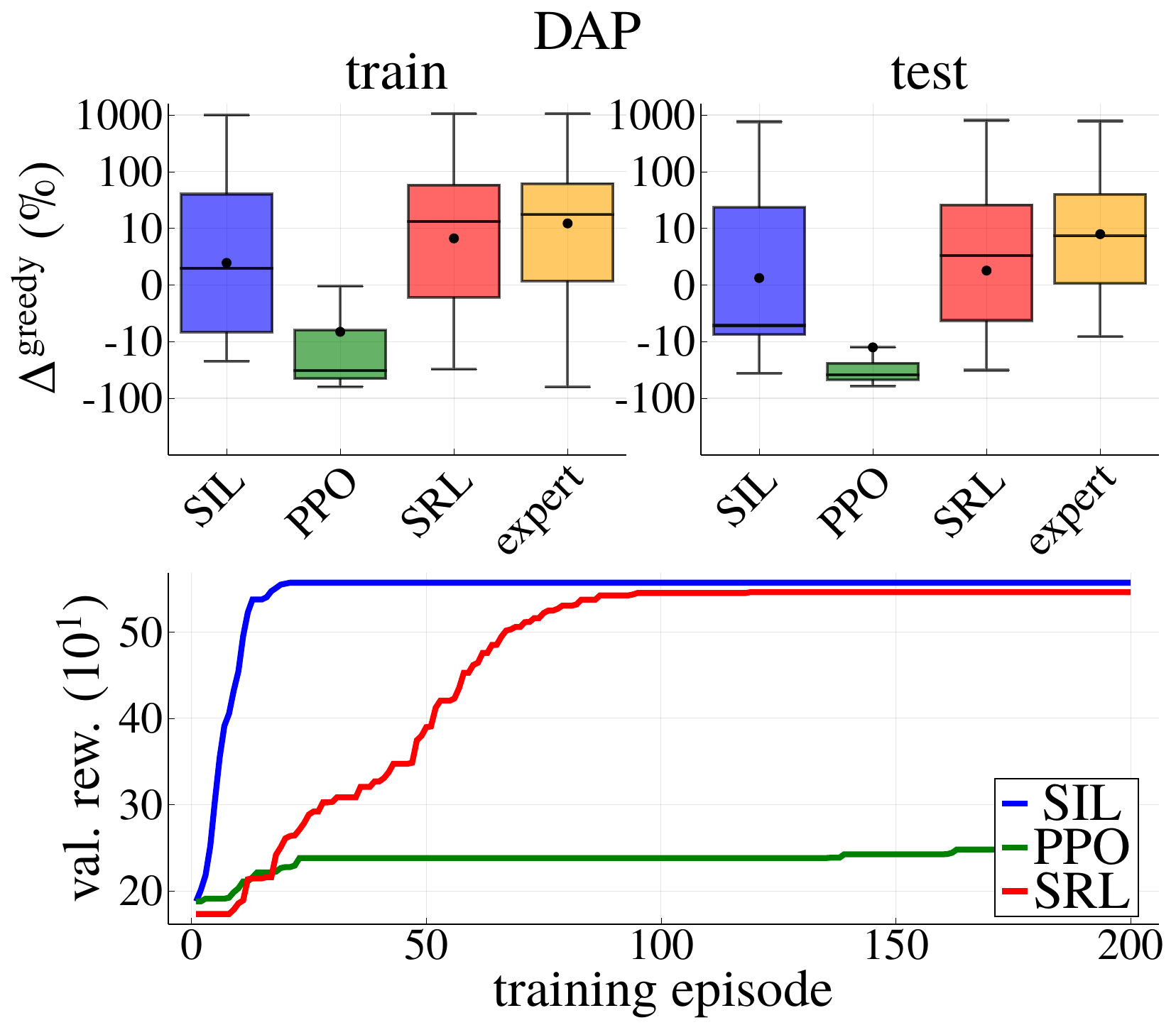}
    \end{minipage}
    \begin{minipage}[t]{0.49\textwidth}
        \includegraphics[width=\textwidth]{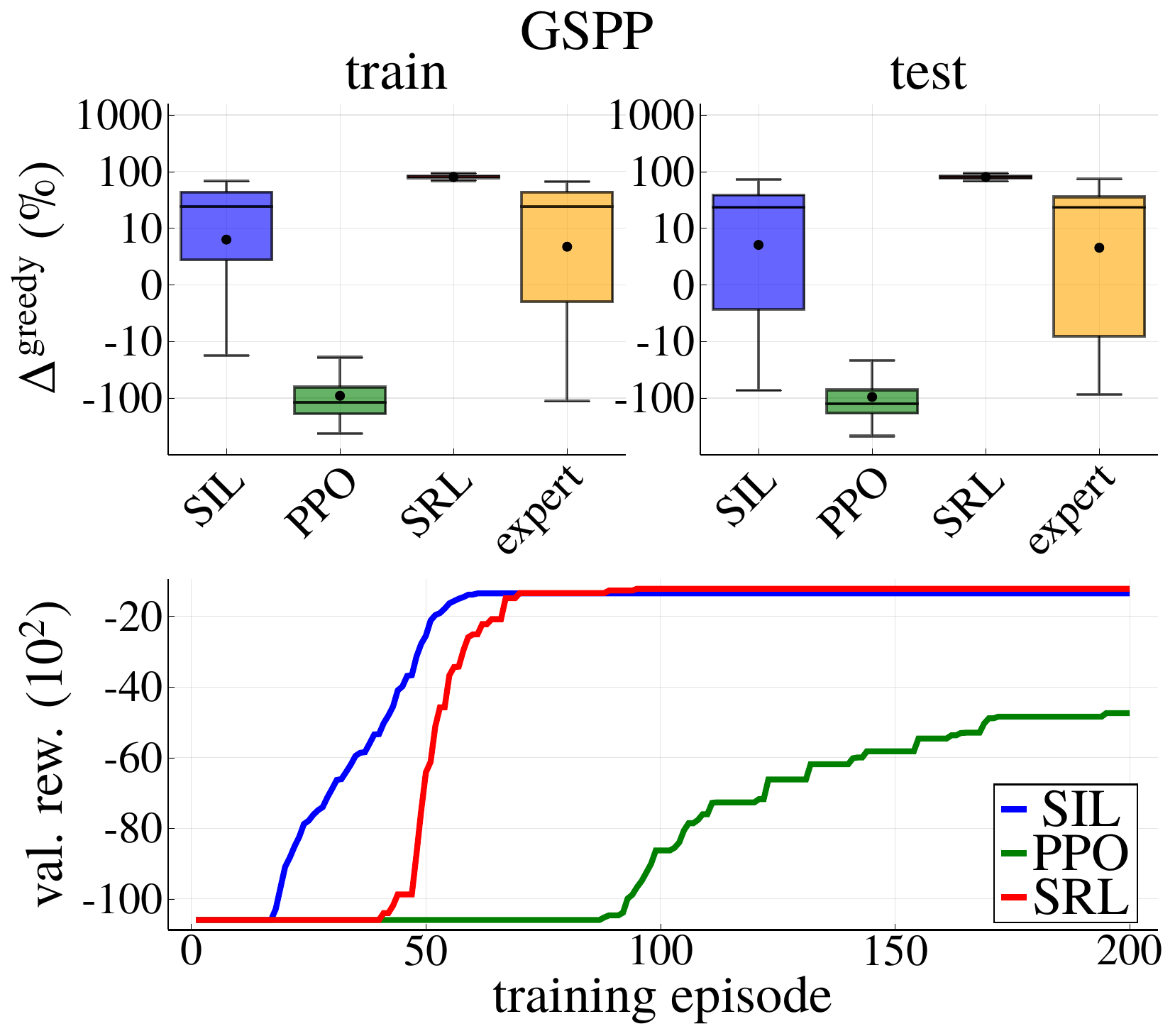}
    \end{minipage}
    \caption{DAP and GSPP results. Left: final train and test-performance compared to greedy ($\Delta^{\text{greedy}}$); right: validation performance during training; averaged over 10 random model initializations.}
    \label{fig:dynamic_results}
\end{figure}

\begin{table}[t]
  \caption{Standard deviation of validation rewards during training, final testing rewards over 10 random model initializations, and training time of algorithms in the DVSP, DAP and GSPP.}
  \centering
  \begin{tabular}{rrrrrrrrrr}
    \toprule
    \multirow{2}[2]{*}{Algorithm} & \multicolumn{3}{c}{DVSP} & \multicolumn{3}{c}{DAP} & \multicolumn{3}{c}{GSPP} \\
    \cmidrule(lr){2-4} \cmidrule(lr){5-7} \cmidrule(lr){8-10}
    & train & test & time & train & test & time & train & test & time \\
    \midrule
    SIL & 0.3 & 0.4 & 12m & 0.8 & 11.9 & 3m & 39.3 & 1.1 & 11m \\
    PPO & 5.8 & 5.6 & 3m & 5.4 & 13.5 & 5m & 105.8 & 47.0 & 10m \\
    SRL & 0.3 & 0.3 & 31m & 1.8 & 1.9 & 31m & 72.1 & 0.6 & 34m \\
    \bottomrule
  \end{tabular}
  \label{tab:dynamic_results}
\end{table}

\section{Conclusion} \label{sec:conclusion}

In this paper, we address combinatorial MDPs (C-MDPs), which present substantial challenges to current RL algorithms, despite being common in many industrial applications. Utilizing the framework of COAML-pipelines, we propose \emph{Structured Reinforcement Learning} (SRL), a primal-dual algorithm using Fenchel-Young losses to train COAML-pipelines in an end-to-end fashion, thereby learning policies for C-MDPs using collected experience only. We compare SRL to Structured Imitation Learning (SIL) and unstructured RL in three static and three dynamic environments, representing typical industrial problem settings with combinatorial action spaces. The performance of SRL is competitive to SIL in the static environments and up to 78\% better in the dynamic environments. SRL consistently outperforms unstructured RL by up to 92\%, additionally being more stable and converging quicker, at the cost of higher computational effort.

\begin{ack}
We thank the BAIS research group at TUM for valuable comments and discussions. The work of Heiko Hoppe was supported by the Munich Data Science Institute with a Linde/MDSI PhD Fellowship.
\end{ack}

\newpage

\bibliographystyle{plainnat}
\bibliography{Structured_RL}


\newpage
\appendix

\section{Combinatorial Optimization-augmented Machine Learning pipelines} \label{app:COAML-pipelines}

\begin{figure}[b]
    \vspace{-6mm}
    \centering
    \includegraphics[width=\textwidth]{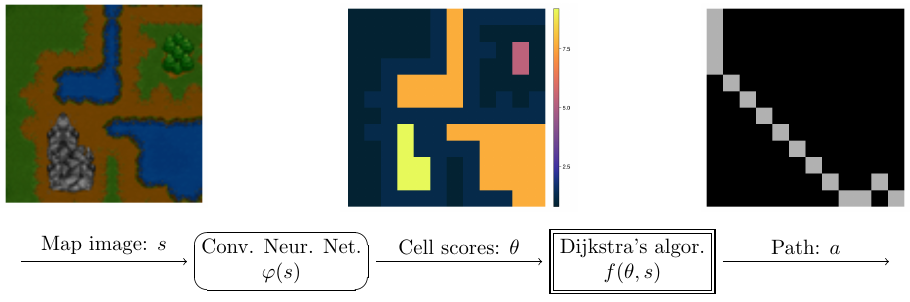}
    \caption{COAML-pipeline for the Warcraft Shortest Paths Problem: a convolutional neural network estimates cell scores based on image pixels. The CO-layer applies Dijkstra's algorithm on the scores to create a path between the top left and the bottom right corner.}
    \label{fig:pipeline_wcsp}
    \vspace{-6mm}
\end{figure}

In a C-MDP, we are usually confronted with a high-dimensional state space $\mathcal{S}$ and a high-dimensional and combinatorial action space $\mathcal{A}(s)$. Processing the former requires an architecture capable of generalizing well across states and inferring information from contextual information. Neural networks, or more generally statistical models, are known to have these properties, which rule-based decision systems or combinatorial optimization methods commonly lack. In contrast, addressing combinatorial action spaces is challenging for statistical models: Using traditional approaches, every feasible action would have to correspond to one output node of the neural network, which then estimates a probability of choosing that action given the state. Such a network design is already challenging due to the state-dependent action space $\mathcal{A}(s)$, and furthermore impractical due to the large dimensionality of $\mathcal{A}(s)$. While the use of problem-specific neural networks of multi-agent approaches is possible, it is far easier to employ a combinatorial optimization to select a feasible action. Optimization methods have three advantages for this setting: i) they ensure feasibility of the selected action; ii) they scale to high-dimensional action spaces far better than plain neural network architectures; and iii) they naturally explore the action space by searching for an optimal solution iteratively.

Since we have established methods for handling both high-dimensional state spaces and high-dimensional, combinatorial action spaces, we can integrate these methods to leverage their combined strengths. This is the core idea behind CO-augmented Machine Learning-pipelines. In such pipelines, a statistical model $\varphi_w$, typically implemented as a neural network with parameters $w$, encodes the state $s$ to estimate a score vector $\theta = \varphi_w(s)$. A combinatorial optimization solver $f$ then uses these scores as coefficients in a linear objective function to compute an action by solving the following combinatorial optimization:
\begin{equation*}
f(\theta,s): a = \argmax_{\tilde{a} \in \mathcal{A}(s)} : \langle \theta, \tilde{a} \rangle.
\end{equation*}
Through this integration, $f$ effectively becomes part of the actor model, commonly referred to as the CO-layer $f$. Importantly, the score vector $\theta$ is latent, i.e., it is not observed directly, nor are its true values typically known. When training a COAML-pipeline in an end-to-end fashion, we rely solely on observed outcomes, without explicit supervision on $\theta$. The resulting policy of such a pipeline can be formalized as $\pi_w(\cdot | s) := \delta_{f(\varphi_w(s), s)}$, representing a Dirac distribution centered on the action output by $f$.

As a motivating example, consider the Warcraft Shortest Paths Problem \citep{vlastelicaDifferentiationBlackboxCombinatorial2020}, illustrated in Figure~\ref{fig:pipeline_wcsp} and detailed in Appendix~\ref{app:environments}. In this setting, the state $s$ is given by a map image with dimensions $96 \times 96 \times 3$ pixels. The task is to find the cost-minimizing path from the top-left to the bottom-right corner of the map, which corresponds to the action $a$. The path cost is determined by the terrain the agent traverses, with terrain types encoded by specific color codes. To solve this task, the state must first be encoded into a structured representation. We employ a convolutional neural network to estimate scores for each cell in a $12 \times 12$ grid. These scores $\theta$ are then used by Dijkstra’s algorithm to compute the path that minimizes the cumulative cell costs. Notably, neither the convolutional network nor Dijkstra’s algorithm alone can solve the raw WSPP map image. However, the COAML-pipeline efficiently learns to find cost-minimal paths through end-to-end training, combining the strengths of both components.

\section{Proofs} \label{app:proofs}
In Algorithm~\ref{alg:SRL}, we use two different regularization functions from the literature on the distribution simplex~$\Delta^\calA$.
We start by introducing these regularizations. 
The literature considers (sub)gradients of these functions only in the (relative) interior of the probability simplex.
In order to be able to work with sampled distributions, we extend them to the boundary of the simplex.
We can then prove Proposition~\ref{prop:relation_primal_dual_static}.
\subsection{Two regularizations: negentropy and sparse perturbation.}
The first regularization is the \emph{negentropy} \citep{blondelLearningFenchelYoungLosses2020}:  
\[\Omega_{\Delta}(q) = \sum_{a \in \calA}q_a \log(q_a) + \bbI_{\Delta^\calA}(q),\]
where $\bbI_{\Delta^\calA}$ is the characteristic function of the set $\Delta^\calA$, leading to the distribution
    \begin{equation}\label{eq:exponentialFamily}
        \nabla \Omega^*_{\Delta}(\gamma) = \big(e^{\gamma_a- A_{\Delta}(\gamma)}\big)_{a \in \calA}\quad \text{where}\quad A_{\Delta}(\gamma) = \log\Big(\sum_{a' \in \calA}\exp(\gamma_{a'})\Big).
    \end{equation}
The second is the Fenchel conjugate of the \emph{sparse perturbation} $\Omega_{\varepsilon, \Delta} :=F_{\varepsilon,\Delta}^*$ \citep{bouvierPrimaldualAlgorithmContextual2025},  with
\[F_{\varepsilon,\Delta}(\gamma) = \bbE_Z[\max_{a\in \calA}\gamma_a + \varepsilon Z^\top a] = \bbE_Z[\max_{q\in \Delta^{\calA}}(\gamma + \varepsilon A^\top Z)^\top q],\] 
where $Z \in \bbR^{d}$ is a random variable, typically a standard Gaussian. The resulting distribution is
\begin{equation}\label{eq:gradient_sparse_pert} \nabla \Omega_{\varepsilon, \Delta}^*(\gamma) = \nabla F_{\varepsilon,\Delta}(\gamma) = \bbE_Z[\argmax_{q\in \Delta^{\calA}}(\gamma + \varepsilon A^\top Z)^\top q].
\end{equation}
Recall that the negentropy can be expressed as a variant of the conjugate of the sparse perturbation, when we take $Z$ distributed according to a Gumbel law. 

\subsection{Extension on the boundary}
The expectation in the right-hand side of Equation~\eqref{eq:gradient_sparse_pert} naturally extends $\nabla \Omega_{\varepsilon, \Delta}^*$ in $\big(\bbR \cup \{-\infty\}\big)^{\calA}$.
Indeed, for any ${\gamma \in \big(\bbR \cup \{-\infty\}\big)^{\calA}}$, and irrespective of the perturbation $Z \in \bbR^{d}$, any action $a$ satisfying $\gamma_a = -\infty$ will never appear in the $\argmax$. This observation allows us to formally define the effective support of $\gamma$ as  $\hat \calA(\gamma) = \{a \in \calA \mid \gamma_a > - \infty \}$, we get
 \begin{equation}\label{eq:extension_nabla}
     \nabla \Omega_{\Delta}^*(\gamma) = \begin{cases}
          0, & \text{for } a \in \calA \setminus \hat \calA(\gamma),  \\
        \nabla \Omega_{\Delta^{\hat \calA(\gamma)}}^*( \hat \gamma)_a, & \text{for } a \in  \hat \calA(\gamma),
     \end{cases}
 \end{equation}
where $\hat \gamma \in \bbR^{\hat \calA(\gamma)}$ is the vector of finite components of $\gamma$, and $\nabla \Omega_{\Delta^{\hat \calA(\gamma)}}^*(\hat \gamma)_a$ is the component indexed by $a$ of the vector $\nabla \Omega_{\Delta^{\hat \calA( \gamma)}}^*( \hat \gamma)$ which belongs to $\Delta^{\hat \calA(\gamma)}$. In the other way round, we extend $\partial \Omega_{\Delta}$ at the relative boundary of the simplex~$\Delta^{\calA}$ as follows. Let $q \in \Delta^{\calA}$ be a sparse distribution (with some null components). In a similar way, we introduce the set $\hat \calA(q) = \{a \in \calA \mid q_a > 0 \}$, and define
 \begin{equation}\label{eq:extension_partial}
     \partial \Omega_{\Delta}(q) \ni \gamma = \begin{cases}
          - \infty & \text{for } a \in \calA \setminus \hat \calA(q) \\
        \hat \gamma_a, \hat \gamma \in \partial \Omega_{\Delta^{ \hat \calA(q)}}(\hat q) & \text{for } a \in \hat \calA(q),
     \end{cases}
 \end{equation}
where $\hat q \in \Delta^{\hat \calA(q)}$ is the sparse distribution seen as a dense distribution in the distribution simplex corresponding to its support. 

\vspace{8mm}

\subsection{Proof of Proposition~\ref{prop:relation_primal_dual_static}}

\begin{proof}[Proof of Proposition~\ref{prop:relation_primal_dual_static}]
We go through the steps of Equations~\eqref{eq:primal_dual_theta_MC}, and show that we indeed recover the actor update of Algorithm~\ref{alg:SRL}. 

For step \eqref{eq:primal_dual_sampling}, let $\theta^{(t)}$ be a given vector in $\bbR^d$, the distribution we sample from involves the gradient of the conjugate of the sparse perturbation
\[\nabla \Omega_{\varepsilon, \Delta}^*(A^\top \theta^{(t)}) = \bbE_Z\big[\argmax_{q \in \Delta^\calA} \langle A^\top \theta^{(t)} + \varepsilon A^\top Z | q \rangle \big] = \bbE_Z\big[\argmax_{q \in \Delta^\calA} \langle \big((\theta^{(t)}+\varepsilon Z)^\top a \big)_{a \in \calA} | q \rangle \big].\]

Note that the $\argmax$ in the right-hand side is almost surely a Dirac because $a \mapsto (\theta^{(t)} + \varepsilon Z)^\top a$ is almost surely injective. Since the $\argmax$ returns a Dirac almost surely, we can equivalently rewrite the expectation as a probability for $a \in \calA$: 

\begin{align*}
    \nabla \Omega_{\varepsilon, \Delta}^*(A^\top \theta^{(t)})_a &= \Big[\bbE_Z\big[\argmax_{q \in \Delta^\calA} \langle \big((\theta^{(t)}+\varepsilon Z)^\top a' \big)_{a' \in \calA} | q \rangle \big] \Big]_a,\\
    & = \bbP_Z \big(\argmax_{a' \in \calA} (\theta^{(t)} + \varepsilon Z)^\top a' = a \big).
\end{align*}
Therefore, using the computations above, the step~\eqref{eq:primal_dual_sampling} of sampling ${(a_i^{(t+\frac{1}{2})})_{i \in [m]} \sim_{\text{i.d.}} \nabla \Omega_{\varepsilon, \Delta}^*(A^\top \theta^{(t)})}$ is equivalent to sampling $(Z_i)_{i \in [m]}$, and computing for each ${a_i^{(t+\frac{1}{2})} = \argmax_{a \in \calA} (\theta^{(t)} + \varepsilon Z_i)^\top a}$. This is precisely the first step in the actor update in Algorithm~\ref{alg:SRL}.

Step \eqref{eq:primal_dual_empirical_dist} is explicit. The (empirical) sparse distribution~$\hat q_{m}^{(t+\frac{1}{2})} = \frac{1}{m} \sum_{i=1}^m \delta_{a_i^{(t + \frac{1}{2})}}$ lies on the (relative) boundary of the simplex, as it assigns zero probability to actions not selected among the samples. For each $a$ in $\calA$, let $k_a$ denote the number of samples $i$ such that $a_i^{(t+\frac12)} = a$.

In step \eqref{eq:primal_dual_score}, we use the negentropy as regularization function~$\Omega_{\Delta}$ over the distribution simplex $\Delta^\calA$. Using the extension defined above, $\gamma_m^{(t+\frac{1}{2})} \in \partial \Omega_{\Delta}(\hat q_{m}^{(t+\frac{1}{2})})$ is such that there exists $\alpha \in \bbR$, such that for $a \in \calA$,
\[\big(\gamma_m^{(t+\frac{1}{2})} + \frac{1}{\tau} \gamma_\beta \big)_a = \alpha + \ln(k_a) + \frac{1}{\tau} Q_{\psi_\beta}(a)\]
where $\ln(0)$ is taken equal to $-\infty$.

In step~\eqref{eq:primal_dual_moment}, using again the extension of $\nabla \Omega_{\Delta}^*$ in $\big(\bbR \cup \{- \infty\} \big)^{\calA}$, Equation~\eqref{eq:exponentialFamily}, and defining the normalization constant $\calZ_{m, \beta} := \sum_{a \in \calA} \exp\big(\ln(k_a) + \frac{1}{\tau}Q_{\psi_\beta}(a) \big)$, 

\[\mu^{(t+1)}  = A\nabla \Omega_{\Delta}^*\big(\gamma_m^{(t+\frac{1}{2})} + \frac{1}{\tau} \gamma_\beta\big) = \sum_{a \in \calA} a \frac{k_a e^{\frac{1}{\tau} Q_{\psi_\beta}(a)}}{\calZ_{m, \beta}} = \softmax_{i \in [m]} \left( \frac{1}{\tau} \; Q_{\psi_\beta}(a_i^{(t+\frac{1}{2}}) \right).\]

We thus recover the target action $\hat a$ defined by Equation~\eqref{eq:target_action}. 

Last, step~\eqref{eq:primal_dual_dual} can be written using Fenchel duality results \citep{blondelLearningFenchelYoungLosses2020}
\[\theta^{(t+1)} \in \partial \Omega_{\varepsilon, \calC}(\mu^{(t+1)}) \iff \theta^{(t+1)} \in \argmin_\theta \calL_{\Omega_{\varepsilon, \calC}}(\theta; \mu^{(t+1)}),\]

where $\calL_{\Omega_{\varepsilon, \calC}}$ is the Fenchel-Young loss generated by $\Omega_{\varepsilon, \calC}$. This is precisely the last step in the actor update in Algorithm~\ref{alg:SRL}, using the perturbed optimizer framework to define the regularization function, as detailed in \citet{berthetLearningDifferentiablePerturbed2020}. 

This completes the proof that the primal-dual updates recover the actor update steps in Algorithm~\ref{alg:SRL}.
\end{proof}

\newpage
\section{Experiments} \label{app:experiments}

In the following, we outline the setup and design of our experiments and explain the benchmark algorithms we used in all environments.

\subsection{Experimental setup}

We conduct all experiments on the same hardware and use the same general method of conducting experiments across environments. We use the same results metrics for all algorithms and environments, ensuring comparability of the algorithms. In general, our experiments are reproducible using modest hardware equipment.

\subsubsection{Hardware setup}

We conduct all experiments on a MacBook Air M3, using the Julia programming language. Given the usually small neural networks required for COAML-pipelines in our environments, the experiments take between 3 and 90 minutes. No external computing resources were required for running the experiments. This setup is the same as the one we used for all algorithmic development.

\subsubsection{Hyperparameters}

We present an overview over the hyperparameters of the algorithms in Table~\ref{tab:experiments_hyperparams}. For the RL algorithms, an episode consists of testing the algorithm's performance, collecting experience in the environment, and performing a number of updates, specified as iterations. For SIL, episodes usually correspond to epochs, an epoch being a complete pass of the training dataset.

\begin{table}[b]
  \caption{Overview over hyperparameters included in the algorithms. Not all hyperparameters are used in all environments.}
  \centering
  \begin{tabular}{rrrr}
    \toprule
    Hyperparameter & SIL & PPO & SRL \\
    \midrule
    Episode number & Yes & Yes & Yes \\
    Iterations number & Yes & Yes & Yes \\
    Batch size & Yes & Yes & Yes \\
    Learning rate actor (incl. schedule) & Yes & Yes & Yes \\
    Learning rate critic(s) (incl. schedule) & No & Yes & Yes \\
    Episodes training critic only & No & Yes & Yes \\
    Replay buffer size & No & Yes & Yes \\
    Exploration standard dev. $\sigma_f$ (incl. schedule) & No & Yes & Yes \\
    No. samples for $\widehat a$ & No & No & Yes \\
    Standard dev. $\sigma_b$ for $\widehat a$ (incl. schedule) & No & No & Yes \\
    Temperature param. $\tau$ (incl. schedule) & No & No & Yes \\
    No. samples for $\calL_\Omega(\theta;\widehat a)$ & Yes & No & Yes \\
    Standard dev. $\varepsilon$ for $\calL_\Omega(\theta;\widehat a)$  & Yes & No & Yes \\
    \bottomrule
  \end{tabular}
  \label{tab:experiments_hyperparams}
\end{table}

\subsubsection{Experiment conduction}

We separate all instances into a train, validation, and test dataset. We create the training dataset for SIL by applying the expert policy to the training instances and storing the solutions. To tune the hyperparameters, we use the same random model initialization for SIL, PPO, and SRL. We tune the number of episodes and the number of iterations per episode using PPO, as it is typically the most constrained in terms of iterations and requires the largest number of episodes due to its on-policy, unstructured nature. This setup favors the baselines -- particularly PPO -- since SRL often converges more quickly but incurs higher computational cost per episode. As a result, PPO holds a natural advantage in runtime comparisons.

We then use the same number of episodes and iterations to train both PPO and SRL, and adjust the number of epochs for SIL to ensure that all methods perform approximately the same number of update steps overall. In most cases, this results in an equal number of episodes and epochs. For the DAP and GSPP, however, we reduce the number of epochs to account for the large size of the training dataset. We tune the exploration standard deviation~$\sigma_f$, the perturbation standard deviation~$\sigma_b$, the temperature parameter~$\tau$, and the learning rate -- typically shared between actor and critic, or set slightly higher for the critic -- using a grid search for each algorithm. Each grid search involves between 3 and 30 training runs. All other hyperparameters do not require detailed tuning.

Once the optimal hyperparameters are identified, we run each algorithm with ten randomly initialized actor (and, where applicable, critic) models. After each episode or epoch, the actor is evaluated on the training and validation datasets -- or a subset thereof to improve efficiency. We save the actor model whenever it achieves the best performance observed so far. At the end of training, the best-performing actor model is restored and used for final evaluation on the training and test datasets.

\subsubsection{Results metrics}

To compare performance, we run the best saved model of each algorithm after each run with different random model initializations on the train and the test dataset. We calculate the mean over the ten models per instance of the train and test dataset, using these mean per-instance rewards in the results boxplots. To compare convergence speed, we store the validation rewards over the course of training and calculate the mean across the ten runs per algorithm and environment per episode. In the lineplots, we display the highest mean validation reward achieved by the model so far for each training episode. We report the mean of the standard deviations across the validation and final test rewards of the ten runs in the tables. We further measure the time to run an algorithm in minutes, reporting that number in the tables as well. Finally, we calculate the overall mean reward of the final tests on the train and test dataset per algorithm and environment and display it in Appendix~\ref{app:results}.

\subsection{Algorithm specification}

For training COAML-pipelines, we compare SRL to two benchmark algorithms: SIL and PPO. SIL uses Fenchel-Young losses like SRL, but relies on expert imitation instead of reinforcement learning. Due to its methodological proximity and empirical performance \citep[e.g.,][]{batyCombinatorialOptimizationEnrichedMachine2024, jungelLearningbasedOnlineOptimization2024}, it is a natural benchmark for SRL. PPO is an unstructured RL algorithm, which is well-known for its stability and performance. Therefore, it is the most sensible RL-benchmark for SRL. We also show why PPO is better suited than Soft Actor-Critic for training the COAML-pipelines used in our experiments.

\subsubsection{Structured Imitation Learning}

Structured Imitation Learning is an imitation learning algorithm successfully applied in recent works such as \citet{batyCombinatorialOptimizationEnrichedMachine2024} and \citet{jungelLearningbasedOnlineOptimization2024}. As an imitation learning approach, SIL requires a pre-collected training dataset consisting of states $s$ and corresponding expert actions $\bar{a}$. Since the algorithm has direct access to these expert actions, it does not rely on a critic to generate learning targets.

Training the actor model using SIL proceeds by iterating over the training dataset and updating the model using the Fenchel-Young loss $\mathcal{L}_\Omega(\theta; \bar{a})$, which compares the model’s unperturbed score vector $\theta$ to the expert action $\bar{a}$. This update step is structurally identical to that used in SRL, and we apply the same hyperparameters for the Fenchel-Young loss in both algorithms. The critical distinction is the source of the target action: while SRL derives its target action $\widehat{a}$ from a critic, SIL directly uses the expert action $\bar{a}$ from the provided training dataset. In practice, SIL can be trained using mini-batches, though it often suffices to perform updates with individual state-action pairs.

Unlike online reinforcement learning methods, SIL does not interact with the environment during training and, accordingly, does not require an exploration standard deviation $\sigma_f$. This offline setup enhances sample efficiency and improves training stability. However, it introduces a limitation in multi-stage environments: since SIL exclusively observes expert trajectories, it cannot learn effective policies for situations outside the demonstrated paths. Consequently, if the agent deviates from the expert path during deployment, it may struggle to recover, potentially leading to sub-optimal decisions.

\subsubsection{Proximal Policy Optimization}

PPO is a classical RL algorithm proposed by \citet{schulmanProximalPolicyOptimization2017}, to whom we refer for details. In the context of COAML-pipelines, PPO selects actions by perturbing the score vector $\theta$ using a Gaussian distribution $Z \sim N(\theta_j,\sigma_f)$, sampling a single perturbed score vector $\eta$, and calculating $a=f(\varphi_w(s),s)$. In its training, PPO considers the CO-layer $f$ to be part of the environment and treats the perturbed score vector $\eta$ as its action. It then calculates the loss function
\begin{equation*}
    \calL(\varphi_w) = \mathbb{E}_{j \sim D}\left[ \min \left( \frac{\pi_{\varphi_w}(\eta_j|s_j)}{\pi_{\varphi_{\bar w}}(\eta_j|s_j)} \cdot A(s_j, \eta_j), \: \text{clip} \left( \frac{\pi_{\varphi_w}(\eta_j|s_j)}{\pi_{\varphi_{\bar w}}(\eta_j|s_j)}, 1-\epsilon, 1+\epsilon \right) \cdot A(s_j, \eta_j) \right) \right]
\end{equation*}
for transitions $j$ in replay buffer $D$.

Given the use of $\eta$ as an action, PPO considers the policy $\pi_{\varphi_w}(\eta|s)$, which is the probability of observing vector $\eta$ given state $s$ under the Gaussian distribution $Z \sim N(\theta,\sigma_f)$. In practise, the probability density function of $\eta$ given $Z$ is used. If we update using batches that contain transitions collected with different values for $\sigma_f$, we average $\sigma_f$ to improve stability. A key element of PPO is the policy ratio, which should ensure proximity between new and old policies via clipping
\begin{equation*}
    \frac{\pi^{\varphi_w}(\eta|s)}{\pi^{\varphi_{\bar w}}(\eta|s)}.
\end{equation*}
The policy ratio is the probability of observing $\eta$ given the current actor $\varphi_w$ divided by the probability of observing $\eta$ given the old actor $\varphi_{\bar w}$. The old actor $\varphi_{\bar w}$ is the network used to collect the experience considered in the current update.

PPO clips the policy ratio using the clipping ratio $\epsilon$ to ensure that the new policy does not deviate into untrusted regions far away from the old policy. Constrained by the clipping, the target of a PPO update is the maximization of the advantage $A(s, \eta)=Q(s, \eta)-V(s)$. Since $V(s) = Q(s, \theta)$, the advantage is the difference in value gained by executing the action corresponding to $\eta$ instead of the action corresponding to $\theta$ given the old policy $\pi^{\varphi_{\bar w}}$. Finally, PPO calculates and applies the gradients $\nabla_w\varphi_w(s)$ to $\varphi_w$.

For estimating the Q-values and V-values, we we use the same critic architectures as for SRL. Despite being an on-policy algorithm, PPO can use a replay buffer, although that is usually smaller than for off-policy algorithms.

\subsubsection{Benchmark reasoning}

We choose PPO over Soft Actor-Critic (SAC) \citep{haarnojaSoftActorCriticOffPolicy2018, haarnojaSoftActorCriticAlgorithms2019} for the following reasons:

\paragraph{Actor Network Structure} 
SAC requires the actor to output both the mean and the standard deviation of a continuous action distribution in order to perform entropy regularization. This would necessitate neural networks with two outputs---$\theta$ and $\sigma_f$. In contrast, PPO allows us to manually set $\sigma_f$, avoiding the need for a separate network head or specialized actor architecture. This ensures consistency across algorithms and avoids introducing additional sources of divergence.

\paragraph{Critic Differentiability Constraints} 
SAC requires the critic to be differentiable with respect to the actor. However, our critic takes the form $Q_{\psi_\beta}(s,a)$ and operates directly on actions $a$, making such differentiation infeasible. Adapting the critic to work with $\theta$ or to be decomposable would require fundamentally different architectures, which, in early experiments, led to significantly worse performance. Moreover, directly differentiating through $Q_{\psi_\beta}(s,a)$ would require specialized structured loss functions, which are not yet available and remain an open problem for future work.

\paragraph{Entropy Regularization in Combinatorial Action Spaces} 
SAC inherently relies on entropy regularization of the action distribution. In our setting, this would regularize the distribution of $\theta$, while a regularization over the combinatorial actions $a$ is actually needed, which is not directly feasible given our pipeline. Relying on entropy to adjust $\sigma_f$ in this setup could lead to instability or poor local optima. We thus avoid this risk by setting $\sigma_f$ manually and leave the development of suitable entropy regularization schemes for combinatorial action spaces to future work.

\newpage
\section{Environment specification} \label{app:environments}

In the following, we provide a description of the six environments we use to test SRL. For each environment, we explain the environment specification, the design of the expert and the greedy policies, how the COAML-pipeline is specified, how the critic is specified, and what hyperparameters we use for each algorithm in the environment. We will start with the static environments, followed by the dynamic environments.

\subsection{Warcraft Shortest Paths Problem}

The \emph{Warcraft Shortest Path Problem} is a popular benchmark in the literature on COAML-pipelines, introduced by \citet{vlastelicaDifferentiationBlackboxCombinatorial2020}.

\begin{table}[b]
  \caption{Overview over hyperparameters in the WSPP.}
  \centering
  \begin{tabular}{rrrr}
    \toprule
    Hyperparameter & SIL & PPO & SRL \\
    \midrule
    Episode number & 200 & 200 & 200 \\
    Iterations number & 120 & 120 & 120 \\ 
    Batch size & 60 & 20 & 60 \\
    Learning rate actor (incl. schedule) & 1e-3 & 5e-4 $\rightarrow$ 1e-4 & 2e-3 $\rightarrow$ 1e-3 \\
    Exploration standard dev. $\sigma_f$ (incl. schedule) & -- & 0.1 $\rightarrow$ 0.05 & -- \\
    No. samples for $\widehat a$ & -- & -- & 40 \\
    Standard dev. $\sigma_b$ for $\widehat a$ (incl. schedule) & -- & -- & 0.1 $\rightarrow$ 0.05 \\
    Temperature param. $\tau$ (incl. schedule) & -- & -- & 0.1 $\rightarrow$ 0.01 \\
    No. samples for $\calL_\Omega(\theta;\widehat a)$ & 20 & -- & 20 \\
    Standard dev. $\varepsilon$ for $\calL_\Omega(\theta;\widehat a)$  & 0.05 & -- & 0.05 \\
    \bottomrule
  \end{tabular}
  \label{tab:wcsp_hyperparams}
\end{table}

\paragraph{Environment specification}
The goal is to find the shortest path between the top left and the bottom right corners of a map.
The observed state $s$ is a map image of $96\times 96 \times 3$ pixels representing the map as a 3D array of pixels.
Each map is decomposed into a $12\times 12$ grid of cells, with each cell having a cost. The cost depends on the difficulty of the associated terrain.
Each terrain has a specific color, allowing for the inference from pixels to costs. The costs themselves are not observed in the state, but hidden from the agent.
The action space is the set of all paths from the top left corner to the bottom right corner of the map.
The reward is the negative total cost of the path, i.e. the sum of the hidden costs of all cells in the path.

\paragraph{Expert policy}
Using full knowledge of cell costs, the optimal path is computed using Dijkstra's algorithm on the cell costs \citep{dijkstraNoteTwoProblems1959}.

\paragraph{Greedy policy}
The greedy policy is a straight path from the top left to the bottom right corner of the map, disregarding all cell costs.

\paragraph{COAML-pipeline}
We use a similar pipeline as in \cite{dalleLearningCombinatorialOptimization2022}: the actor model $\varphi_w$ is a convolutional neural network, based on the logic of a truncated ResNet18, with output dimension $12\times 12$
The CO-layer $f$ is Dijkstra's algorithm \citep{dijkstraNoteTwoProblems1959}, which computes the shortest path between the two corners of the map.

\paragraph{Critic specification}
Since the problem is static and the rewards are deterministic given an action, we do not employ a critic neural network in the WSPP. We assume access to the black-box cost function and use this function as our critic.

\paragraph{Hyperparameters}
We present the hyperparameters utilized for the WSPP in Table~\ref{tab:wcsp_hyperparams}. The number of iterations correspondents to the size of the train dataset.

\subsection{Single Machine Scheduling Problem}

The \emph{Single Machine Scheduling Problem} that we consider is a static industrial problem setting with a large combinatorial action space, introduced by \citet{parmentierStructuredLearningBased2023}.

\begin{table}[b]
  \caption{Overview over hyperparameters in the SMSP.}
  \centering
  \begin{tabular}{rrrr}
    \toprule
    Hyperparameter & SIL & PPO & SRL \\
    \midrule
    Episode number & 2000 & 2000 & 2000 \\
    Iterations number & 420 & 420 & 420 \\ 
    Batch size & 1 & 20 & 20 \\
    Learning rate actor (incl. schedule) & 1e-3 & 5e-4 & 2e-3 $\rightarrow$ 1e-3 \\
    Exploration standard dev. $\sigma_f$ (incl. schedule) & -- & 0.01 & -- \\
    No. samples for $\widehat a$ & -- & -- & 40 \\
    Standard dev. $\sigma_b$ for $\widehat a$ (incl. schedule) & -- & -- & 2.0 \\
    Temperature param. $\tau$ (incl. schedule) & -- & -- & 1e3 \\
    No. samples for $\calL_\Omega(\theta;\widehat a)$ & 20 & -- & 20 \\
    Standard dev. $\varepsilon$ for $\calL_\Omega(\theta;\widehat a)$  & 1.0 & -- & 1.0 \\
    \bottomrule
  \end{tabular}
  \label{tab:sms_hyperparams}
\end{table}

\paragraph{Environment specification}
An instance of the \emph{single machine scheduling problem} requires scheduling a total of~$n\in [50, 100]$ jobs on a single machine.
Each job~$j \in [n]$ has a given processing time~$p_j$ and an release time~$r_j$, prior to which job~$j$ cannot be initiated.
The machine is limited to processing exactly one job at any given moment.
Once processing of a job begins, it must run to completion without interruption, as preemption is prohibited.
The objective is the determination of an optimal scheduling sequence as a permutation~$s = (j_1, ..., j_n)$ of the jobs in~$[n]$ that minimizes the total completion time~$\sum_j C_j(s)$, with $C_j(s)$ being the completion time of job~$j$.
Specifically, for the first job in the sequence, we have~$C_{j_1}(s) = r_{j_1} + p_{j_1}$, while for subsequent jobs where~$k>1$, the completion time is calculated as~$C_{j_k}(s) = \max \big(r_{j_k}, C_{j_{k-1}}(s)\big) + p_{j_k}$.

\paragraph{Expert policy}
For instances with up to $n=110$ jobs a branch-and-memorize algorithm \citep{shangBranchMemorizeExact2021} is used as an exact algorithm to compute optimal solutions.

\paragraph{Greedy policy}
The greedy policy builds a greedy sequence by sorting jobs by increasing release times.
Ties are broken by processing jobs with lower processing times first.

\paragraph{COAML-pipeline}
The actor model $\varphi_w$ is a simple generalized linear model with input dimension $27$ and output dimension 1.
For each job $j\in [n]$ we compute the corresponding feature vector $x_j\in\bbR^{27}$.
The features used in this model are taken from \cite{parmentierStructuredLearningBased2023}.
This allows to compute $(\theta)_{j\in[n]} = (\varphi_w(x_j))_{j\in[n]}$ by applying the linear model in parallel to every job.
The CO-layer $f$ is the ranking operator, which can be formulated as a linear optimization problem:
\begin{equation*}
    f\colon \theta\mapsto \text{ranking}(\theta) = \argmax_{y\in\sigma(n)} \theta^\top y,
\end{equation*}
where $\sigma(n)$ is the set of permutations of $[n]$.

\paragraph{Critic specification}
In this static environment, we again do not employ critic neural networks, but assume having access to the black-box cost function. This assumption is realistic, since evaluating the duration of a schedule it is easier than finding a schedule.

\paragraph{Hyperparameters}
We present the hyperparameters utilized for the Single Machine Scheduling Problem (SMSP) in Table~\ref{tab:sms_hyperparams}. The number of iterations correspondents to the size of the train dataset.

\subsection{Stochastic Vehicle Scheduling Problem}

The \emph{Stochastic Vehicle Scheduling Problem} that we consider is a static, stochastic problem setting with a large combinatorial action space, introduced by \citet{parmentierLearningApproximateIndustrial2022}.

\begin{table}[b]
  \caption{Overview over hyperparameters in the SVSP.}
  \centering
  \begin{tabular}{rrrr}
    \toprule
    Hyperparameter & SIL & PPO & SRL \\
    \midrule
    Episode number & 200 & 200 & 200 \\
    Iterations number & 50 & 50 & 50 \\
    Batch size & 1 & 4 & 4 \\
    Learning rate actor (incl. schedule) & 1e-3 & 1e-2 & 1e-2 $\rightarrow$ 5e-3 \\
    Exploration standard dev. $\sigma_f$ (incl. schedule) & -- & 0.5 $\rightarrow$ 0.1 & -- \\
    No. samples for $\widehat a$ & -- & -- & 20 \\
    Standard dev. $\sigma_b$ for $\widehat a$ (incl. schedule) & -- & -- & 0.1 $\rightarrow$ 0.01 \\
    Temperature param. $\tau$ (incl. schedule) & -- & -- & 1e4 $\rightarrow$ 1e2 \\
    No. samples for $\calL_\Omega(\theta;\widehat a)$ & 20 & -- & 20 \\
    Standard dev. $\varepsilon$ for $\calL_\Omega(\theta;\widehat a)$  & 1.0 & -- & 1.0 \\
    \bottomrule
  \end{tabular}
  \label{tab:svsp_hyperparams}
\end{table}

\paragraph{Environment specification}

The Stochastic Vehicle Scheduling Problem focuses on optimizing vehicle routes across time-constrained tasks in environments with stochastic delay.
Each task ~$v \in\bar V$ is characterized by its scheduled start time~$t_v^b$ and end time~$t_v^e$ (where $t_v^e > t_v^b$). Vehicles can only perform tasks sequentially, with a travel time~$t^{tr}_{(u, v)}$ required between the completion of task~$u$ and start of task~$v$.

Tasks can only be sequentially assigned to the same vehicle when timing constraints are satisfied:
\begin{equation*}
    t_v^b \geq t_u^e + t^{tr}_{(u, v)}
\end{equation*}

The problem can be represented as a directed acyclic graph~$D = (V, A)$, where~$V = \bar V\cup\{o, d\}$ includes all tasks plus two dummy origin and destination nodes.
Arcs exist between consecutive feasible tasks, with every task connected to both origin and destination.

A feasible decision for this problem is therefore a set of disjoint $s-t$ paths such that all tasks are covered.

What distinguishes the stochastic variant is the introduction of random delays that propagate through task sequences. The objective becomes minimizing the combined cost of vehicle routes and expected delay penalties.
The cost of  a vehicle is denoted by $c_{\text{vehicle}}$, and the cost of a unit of delay $c_{\text{delay}}$.
We consider multiple scenarios~$s\in S$, where each task~$v$ experiences an intrinsic delay~$\gamma_v^s$ in scenario~$s$.
The total delay~$d_v^s$ of a task $v$ accounts for both intrinsic delays and propagated delays from preceding task $u$ on the route, calculated as:
\begin{equation*}
d_v^s = \gamma_v^s + \max(d_u^s - \delta_{u, v}^s, 0)
\end{equation*}

Here, $\delta_{u, v}^s$ represents the time buffer between consecutive tasks $u$ and $v$.

We train and test using $|\bar V|=25$ tasks.

For more details about this environment specifications, we refer to \cite{dalleLearningCombinatorialOptimization2022}

\paragraph{Expert policy}
An anticipative solution can be computed by solving the following quadratic mixed integer program:
\begin{subequations}    
    \begin{align}
        \min_{d, y} & \,c_{\text{delay}}\dfrac{1}{|S|}\sum\limits_{s\in S}\sum\limits_{v\in V\backslash\{o,d\}} d_v^s + c_{\text{vehicle}} \sum\limits_{a\in\delta^+(o)}y_a\\
        \text{s.t.} & \sum_{a\in\delta^-(v)}y_a = \sum_{a\in\delta^+(v)}y_a &\forall v\in V\backslash\{o, d\}\label{eq:delay-constraints}\\
        & \sum_{a\in\delta^-(v)}y_a = 1 &\forall v\in V\backslash\{o, d\}\\
        & d_v^s \geq \gamma_v^s + \sum_{\substack{a\in\delta^-(v) \\ a=(u, v)}} (d_u^s - \delta_{u, v}^s)~y_a & \forall v\in V\backslash\{o, d\}, \forall s\in S\\
        & d_v^s\geq \gamma_v^s & \forall v\in V\backslash\{o, d\}, \forall s\in S\\
        & y_a\in\{0,1\} & \forall a\in A
    \end{align}
\end{subequations}

This assumes knowledge of delay scenarios in advance; therefore it cannot be used in practical deployment, but serves as a perfect-information bound.

\paragraph{Greedy policy}
The greedy policy for this problem is solving the deterministic variant of the problem instead, which only minimizes vehicle costs without taking into account delays.
In this case, the problem is easily solved using a flow-based linear program formulation.

\paragraph{COAML-pipeline}
For each arc $a$ in the graph, we compute a feature vector of size 20, containing information about the arc and delay propagation distribution along it.
The actor model $\varphi_w$ is a generalized linear model, that is applied in parallel to all arcs in the graph.
It therefore has input dimension $20$ and output dimension $1$.
The CO-layer $f$ is a linear programming solver, which computes the optimal flow-based solution for the problem, replacing determinsitic arc costs by estimated scores from the actor model.

\paragraph{Critic specification}
In this static environment, we employ sample average approximation instead of critic neural networks to evaluate the costs of an action. For this evaluation, we randomly draw 10 (for 25 tasks) or 50 (for 100 tasks) scenarios per instance, corresponding to delay realizations. We apply the policy to every scenario and estimate the total delay of the scenario. We then calculate the costs as the average delay across all scenarios. Since evaluating an action is considerably easier than generating one, this is a reasonable cost evaluation method given contextual information and stochasticity.

\paragraph{Hyperparameters}
We present the hyperparameters utilized for the Stochastic Vehicle Scheduling Problem (SVSP) in Table~\ref{tab:svsp_hyperparams}. The number of iterations correspondents to the size of the train dataset.

\subsection{Dynamic Vehicle Scheduling Problem}

The \emph{Dynamic Vehicle Scheduling Problem} that we consider is a simplified variant of the \emph{Dynamic Vehicle Routing Problem with Time Windows} introduced in the EURO-NeurIPS challenge 2022~\citep{koolEUROMeetsNeurIPS2022}.

\begin{table}[b]
  \caption{Overview over hyperparameters in the DVSP.}
  \centering
  \begin{tabular}{rrrr}
    \toprule
    Hyperparameter & SIL & PPO & SRL \\
    \midrule
    Episode number & 400 & 400 & 400 \\
    Iterations number & 100 & 100 & 100 \\ 
    Batch size & 1 & 1 & 4 \\
    Learning rate actor (incl. schedule) & 1e-3 & 1e-3 $\rightarrow$ 5e-4 & 1e-3 $\rightarrow$ 2e-4 \\
    Learning rate critic(s) (incl. schedule) & -- & 1e-2 $\rightarrow$ 5e-4 & 2e-3 $\rightarrow$ 2e-4 \\
    Replay buffer size & -- & 12000 (2000 eps.) & 120000 (20000 eps.) \\
    Exploration standard dev. $\sigma_f$ (incl. schedule) & -- & 0.5 $\rightarrow$ 0.05 & 0.1 \\
    No. samples for $\widehat a$ & -- & -- & 40 \\
    Standard dev. $\sigma_b$ for $\widehat a$ (incl. schedule) & -- & -- & 1.0 $\rightarrow$ 0.1 \\
    Temperature param. $\tau$ (incl. schedule) & -- & -- & 10 \\
    No. samples for $\calL_\Omega(\theta;\widehat a)$ & 20 & -- & 20 \\
    Standard dev. $\varepsilon$ for $\calL_\Omega(\theta;\widehat a)$  & 0.01 & -- & 0.01 \\
    \bottomrule
  \end{tabular}
  \label{tab:dvsp_hyperparams}
\end{table}

\paragraph{Environment specification}
The Dynamic Vehicle Scheduling Problem requires deploying a fleet of vehicles to serve customers that arrive dynamically over a planning horizon.
At each time stage, we observe all unserved customers currently in the system, denoted by $V_t$.
We then must: i) determine which customers to dispatch vehicles to; and ii) build vehicle routes starting from the depot to serve them.
Each customer has a specific location and time that must be strictly respected.
Vehicle routes must adhere to time constraints, with waiting allowed at customer locations without additional cost.
The objective is to minimize the total travel cost across all vehicles.
A key operational constraint is that customers approaching their deadline are designated as "must-dispatch" requiring immediate service to ensure feasibility.
We denote by $V_t^{\text{md}}\subset V$ the set of must-dispatch customer.
This mechanism ensures all customers receive service within their time windows by the end of the planning horizon.
An episode has 8 time steps.

Similarly to the stochastic vehicle scheduling problem described above, a feasible decision at time step $t$ can be viewed as a set of disjoint paths in an associated acyclic graph $D = (V_t, A_t)$.

\paragraph{Expert policy}
We compute the anticipative policy that constructs globally optimal routes. Assuming knowledge of all future customer arrivals, we obtain this policy by solving the static vehicle scheduling problem, then decomposing the solution into time-step-specific dispatching decisions.

\paragraph{Greedy policy}
The greedy policy for this problem is to dispatch all customers as soon as they appear; and optimize routes at each time step by solving a static vehicle scheduling problem.

\paragraph{COAML-pipeline}
We use a similar pipeline as introduced by \cite{batyCombinatorialOptimizationEnrichedMachine2024}.
The actor model $\varphi_w$ is a generalized linear model with input dimension $14$ and output dimension $1$.
This model is applied in parallel to each customer $v\in V_t$, to predict a prize $\theta_v$ from its feature vector of size 14.

The CO-layer $f$ is a static vehicle scheduling MIP solver, with arc costs $\theta$ predicted by the actor model:
\begin{equation}
    f\colon\theta \longmapsto \left\{ \begin{aligned}
        \arg\min_y & \sum_{a=(u, v)\in A_t} (\theta_v - d_a) y_a \\
        \text{s.t.} & \sum_{a\in \delta^-(v)} y_a = \sum_{a\in \delta^+(v)} y_a, & \forall v \in V_t \\
        & \sum_{a\in \delta^-(v)} y_a \leq 1, & \forall v \in V_t \\
        & \sum_{a\in \delta^-(v)} y_a = 1, & \forall v\in V_t^{\text{md}} \\
        & y_a \in \{0, 1\}, & \forall a\in A_t
    \end{aligned}\right.
\end{equation}

\paragraph{Critic specification}
In the DVSP, we use a single graph neural network as the critic. The network takes the solution graph as input and outputs a single value as the Q-value, using several graph convolutional layers, a global additive pooling layer, and finally several fully connected feedforward layers as its architecture. The solution graph is a a graphical representation of an action in the DVSP, whith all requests being nodes and vehicle routes being edges. For each node, we pass a feature vector into the critic network. These features are the same as for the actor, plus an indicator whether a request is postponable or not. For each edge, we pass the distance into the critic. We train the critic using ordinary Huber losses between $Q_{\psi_{\beta,k}}(s_t,a_t)$ and $y_t=r_t+\gamma \: Q_{\psi_{\beta,k}}(s_{t+1},\pi(a_{t+1}))$. For this, we use the same transitions as immediately afterwards for the policy update, which we sample from the replay buffer.

\paragraph{Hyperparameters}
We present the hyperparameters utilized for the DVSP in Table~\ref{tab:dvsp_hyperparams}

\subsection{Dynamic Assortment Problem}

The \emph{Dynamic Assortment Problem} that we consider is a multi-stage problem with endogenous uncertainty and a large combinatorial action space. We use a version adapted from the Dynamic Assortment Optimization problem introduced by \citet{chenDynamicAssortmentOptimization2020}. The DAP is also related to recommender systems, as used by \citet{dulac-arnoldDeepReinforcementLearning2016}.

\begin{table}[b]
  \caption{Overview over hyperparameters in the DAP.}
  \centering
  \begin{tabular}{rrrr}
    \toprule
    Hyperparameter & SIL & PPO & SRL \\
    \midrule
    Episode number & 200 & 200 & 200 \\
    Iterations number & 100 & 100 & 100 \\ 
    Batch size & 1 & 4 & 4 \\
    Learning rate actor (incl. schedule) & 1e-4 & 5e-3 & 1e-3 $\rightarrow$ 5e-4 \\
    Learning rate critic(s) (incl. schedule) & -- & 5e-3 & 1e-3 $\rightarrow$ 5e-4 \\
    Replay buffer size & -- & 1600 (20 eps.) & 8000 (100 eps.) \\
    Exploration standard dev. $\sigma_f$ (incl. schedule) & -- & 0.1 $\rightarrow$ 0.05 & 2.0 $\rightarrow$ 1.0 \\
    No. samples for $\widehat a$ & -- & -- & 40 \\
    Standard dev. $\sigma_b$ for $\widehat a$ (incl. schedule) & -- & -- & 2.0 $\rightarrow$ 1.0 \\
    Temperature param. $\tau$ (incl. schedule) & -- & -- & 1.0 \\
    No. samples for $\calL_\Omega(\theta;\widehat a)$ & 20 & -- & 20 \\
    Standard dev. $\varepsilon$ for $\calL_\Omega(\theta;\widehat a)$  & 1.0 & -- & 1.0 \\
    \bottomrule
  \end{tabular}
  \label{tab:dap_hyperparams}
\end{table}

\paragraph{Environment specification} We have $n=20$ items $i \in I$, of which we can show an assortment $S$ of size $K=4$ to a customer each time step. Each item has 4 features and a price, creating the feature vector $v$. The uniform customer calculates a score $\Theta$ per item using a hidden linear customer model $\Phi$ as $\Theta=v^\top\Phi$. The customer then estimates purchase probabilities for each item $i$ using the multinomial logit model
\begin{equation*}
    P(i|S)=\frac{\exp\Theta_i}{1+\sum_{j \in S}\exp\Theta_j},
\end{equation*}
which includes the option of not buying an item. By sampling from the purchase probabilities, the customer purchases an item or no items. We receive the item's price as a reward $r(i)$. After purchasing an item, the third feature of that item is increased by a "hype" factor, which is decreased again over the subsequent 4 time steps. The fourth feature is increased by a "satisfaction" factor as a once-of increment. The other features and the price remain static, having been randomly initialized at the beginning of an episode. The hidden customer model $\Phi$ remains static globally. An episode has 80 time steps.

\paragraph{Expert policy} Due to the endogenous feature updates, finding a globally optimal policy is computationally intraceable. We therefore resort an online optimal policy as an approximation. Given knowledge of $v$ and $\Phi$, we enumerate all possible assortments $S$ of size $K$, calculating $P(i|S) \:\forall\: i \in S$ and then calculating the expected revenue of $S$ as $R(S)=\sum_{i \in S}r(i) \cdot P(i|S)$. We finally select the assortment $S$ with the highest expected revenue $R(S)$.

\paragraph{Greedy policy} The greedy policy sorts items $i$ by their price $r(i)$ and selects the $K$ items with the highest $r(i)$ as $S$.

\paragraph{COAML-pipeline} The actor model $\varphi_w$ is a 2-layer fully connected feedforward neural network with input dimension 10, hidden dimension 5, and output dimension 1. In addition to the feature vector $v$, its input is the current time step relative to the maximum episode length, the one-step change of the endogenous features, and the change of these features since the start of the episode. We perform one pass through $\varphi_w$ per $i$, generating a vector of scores $\theta$. The CO-layer $f$ ranks $\theta$ and selects the $K$ items corresponding to the highest $\theta$ as $S$.

\paragraph{Critic specification} Given the highly stochastic nature of this environment, we employ two critics: the first critic should approximate $R(S)$; it first processes $v$ and the relative time step, which is part of the vector for computational simplicity, using a feedforward layer with an output size 3 per $i \in S$. It then concatenates these intermediate outputs in a vector and feeds them through another feedforward layer with output size 1. It is trained by minimizing the Huber loss between the actual reward $r(i)$ of purchased item $i$ and its output. The second critic should approximate $Q^\pi(s_t,a_t)-r_t=\gamma \: Q^\pi(s_{t+1},\pi(s_{t+1}))$. It receives all features that $\varphi_w$ receives plus a binary indicator whether $i \in S$ as input for all $i$, using a feedforward layer to estimate 5 hidden scores per $i$. After concatenating these scores, it estimates Q-values using a 2-layer feedforward neural network with hidden dimension 10 and output dimension 1. It is trained by minimizing the Huber loss between the on-policy returns $ret_t=\sum_{k=t+1}^T \gamma^{k-t}r_k$ and its output. Due to its on-policy nature, it receives shuffled transitions from the last episode instead of sampled transition from the replay buffer. After failing to converge in initial tests using both critics, we only employ the first critic in PPO. Given the good performance of the myopically optimal policy, and comparably good results when training SRL using only the first or both critics, this should not impede the performance PPO can reach substantially. Were it to converge properly, it should still display a performance similar to that of SIL.

\paragraph{Hyperparameters} We present the hyperparameters utilized for the DAP in Table~\ref{tab:dap_hyperparams}

\subsection{Gridworld Shortest Paths Problem}

The \emph{Gridworld Shortest Paths Problem} that we consider is a dynamic problem with endogenous uncertainty and a large combinatorial action space. It is related to gridworld problems that are commonly used to investigate the scalability of RL algorithms \citep[e.g.,][]{chandakLearningActionRepresentations2019}. It is furthermore related to robot control tasks in discrete environments \citep[e.g.,][]{zhangGeneratingAdjacencyConstrainedSubgoals2020}.

\begin{table}[b]
  \caption{Overview over hyperparameters in the GSPP.}
  \centering
  \begin{tabular}{rrrr}
    \toprule
    Hyperparameter & SIL & PPO & SRL \\
    \midrule
    Episode number & 200 & 200 & 200 \\
    Iterations number & 100 & 100 & 100 \\ 
    Batch size & 1 & 1 & 4 \\
    Learning rate actor (incl. schedule) & 1e-4 & 5e-4 & 1e-3 $\rightarrow$ 5e-4 \\
    Learning rate critic(s) (incl. schedule) & -- & 5e-4 & 1e-3 $\rightarrow$ 5e-4 \\
    Episodes training critic only & -- & 40 & 40 \\
    Replay buffer size & -- & 2000 (20 eps.) & 10000 (100 eps.) \\
    Exploration standard dev. $\sigma_f$ (incl. schedule) & -- & 0.05 & 0.05 \\
    No. samples for $\widehat a$ & -- & -- & 40 \\
    Standard dev. $\sigma_b$ for $\widehat a$ (incl. schedule) & -- & -- & 0.05 \\
    Temperature param. $\tau$ (incl. schedule) & -- & -- & 0.1 \\
    No. samples for $\calL_\Omega(\theta;\widehat a)$ & 20 & -- & 20 \\
    Standard dev. $\varepsilon$ for $\calL_\Omega(\theta;\widehat a)$  & 0.01 & -- & 0.01 \\
    \bottomrule
  \end{tabular}
  \label{tab:gsp_hyperparams}
\end{table}

\paragraph{Environment specification} We control a robot in a gridworld of size $20 \times 20$ with cells $(i,j) \in (I,J)$. Each time step, we need to find the best path between the current position of the robot and a target, the robot can move to all 8 neighbors of a cell if they exist. Upon reaching the target following the path, the target moves to a random new location and we transition to the next state. Each cell $(i,j)$ in the gridworld has six features, $v_{i,j}$ which are randomly initialized at the beginning of an episode and which remain constant for the remainder of the episode. The first three features $v^c_{i,j}$ determine the immediate costs of the cell $c_{i,j}$ via a fixed linear model $\Phi^c$ as $c_{i,j}=(v^c_{i,j})^T\Phi^c$. The final three features $v^\rho_{i,j}$ determine the change of a cost parameter $\rho$ via another fixed linear model $\Phi^\rho$ as $\Delta \rho_{i,j}=(v^\rho_{i,j})^T\Phi^\rho$. The cost of a path $C$ is calculated as the sum of all $c_{i,j}$ of traversed cells times $\rho$. The cost parameter $\rho$ is subsequently updated by multiplying itself with one plus the sum of all $\Delta \rho_{i,j}$ of traversed cells. The presence of $\rho$ introduces strong endogeneity to the problem: a path minimizing the sum of $c_{i,j}$ does not have to be globally optimal, but minimizing the immediate costs has to be balanced with minimizing $\rho$. The linear models $\Phi^c$ and $\Phi^\rho$ remain hidden for agents, which only know the features $v$. An episode has 100 time steps, we use 100 train, validation, and test-episodes.

\paragraph{Expert policy} Given the endogeneity of the environment, finding a globally optimal policy is computationally infeasible. We thus again resort to using a myopically optimal policy as $\bar \pi(s)$. Using full knowledge of $v$ and $\Phi^c$, we estimate the immediate cell costs $c_{i,j}=(v^c_{i,j})^T\Phi^c$ for all $(i,j)$. We then apply Dijkstra's algorithm on these costs to generate the shortest path from the robot's current position to its target position \citep{dijkstraNoteTwoProblems1959}.

\paragraph{Greedy policy} The greedy policy estimates a straight path from the robot's current position to its target position. Disregarding cell features $v$, this is a reasonable estimate for the lowest-cost path that the robot can take.

\paragraph{COAML-pipeline} The actor model $\varphi_\omega$ is a linear model with input dimension 7 and output dimension 1. It takes $v_{i,j}$ and the time step relative to the maximum episode length as input and outputs a score $\theta$. Via a negative absolute activation, $\varphi_\omega$ ensures that all $\theta \leq 0$. We perform one pass per cell $(i,j)$. We then apply Dijkstra's algorithm on these $\theta$ to generate the best path from the robot's current position to its target position given $\theta$ \citep{dijkstraNoteTwoProblems1959}.

\paragraph{Critic specification} We employ double Q-learning to mitigate the critic overestimation bias in this endogenous dynamic environment \citep[cf.,][]{vanhasseltDoubleQlearning2010, fujimotoAddressingFunctionApproximation2018}. Both critics $\psi_{\beta,k}, k\in(1,2)$ have the same structure and learning paradigm, but are initialized using different random seeds. We estimate $Q^\psi(s,a)=\frac{Q_{\psi_{\beta,1}}(s,a)+Q_{\psi_{\beta,2}}(s,a)}{2}$. The critics $\psi_{\beta,k}$ are linear models with input dimension 8 and output dimension 1. For each cell $(i,j) \in a$, they take $v$, the time step relative to the maximum episode length, and the current cost parameter $\rho$ as input, while all inputs are set to zero for $(i,j) \notin a$. After estimating a value for each cell, these values are summed to calculate $Q_{\psi_{\beta,k}}(s,a)$. We train both critics by minimizing the Huber loss between $Q_{\psi_{\beta,k}}(s_t,a_t)$ and $y_t=r_t+\gamma \: Q_{\psi_{\beta,k}}(s_{t+1},\pi(a_{t+1}))$. For this, we use the same transitions as immediately previously for the policy update, which we sample from the replay buffer.

\paragraph{Hyperparameters} We present the hyperparameters utilized for the GSPP in Table~\ref{tab:gsp_hyperparams}

\section{Additional results} \label{app:results}

\subsection{Results for static environments}

\paragraph{Environment description}
The static environments, adopted from \citet{dalleLearningCombinatorialOptimization2022}, are as follows: i) the Warcraft Shortest Paths Problem (WSPP) \citep{vlastelicaDifferentiationBlackboxCombinatorial2020}, where the task is to compute lowest-cost paths on a map, given the raw map image as input; ii) the Single Machine Scheduling Problem (SMSP) \citep{parmentierStructuredLearningBased2023}, where the task is to determine a job sequence for a single machine that minimizes total completion time; and iii) the Stochastic Vehicle Scheduling Problem (SVSP) \citep{parmentierLearningApproximateIndustrial2022}, where the task is to find vehicle routes that minimize random delays while servicing spatio-temporally distributed tasks.

We adopt the same experimental setup as for the dynamic environments. Expert solutions are computed by solving the problems to optimality. In the static environments, we do not use critics; instead, we assume access to a black-box cost function for both the WSPP and the SMSP, and apply sample average approximation to estimate costs in the SVSP.

\begin{figure}
    \vspace{-0.2cm}
    \includegraphics[width=\textwidth]{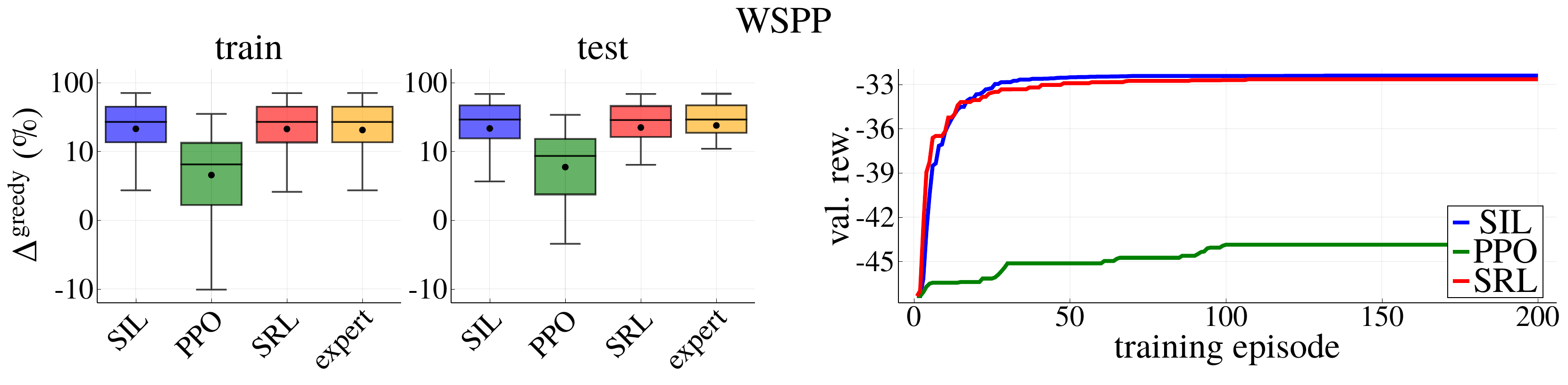}
    \caption{WSPP results. Left: final train and test-performance compared to greedy ($\Delta^{\text{greedy}}$); right: validation performance during training; averaged over 10 random model initializations.}
    \label{fig:wcsp_results}
\end{figure}

\begin{figure}
    \vspace{-0.2cm}
    \begin{minipage}[t]{0.49\textwidth}
        \includegraphics[width=\textwidth]{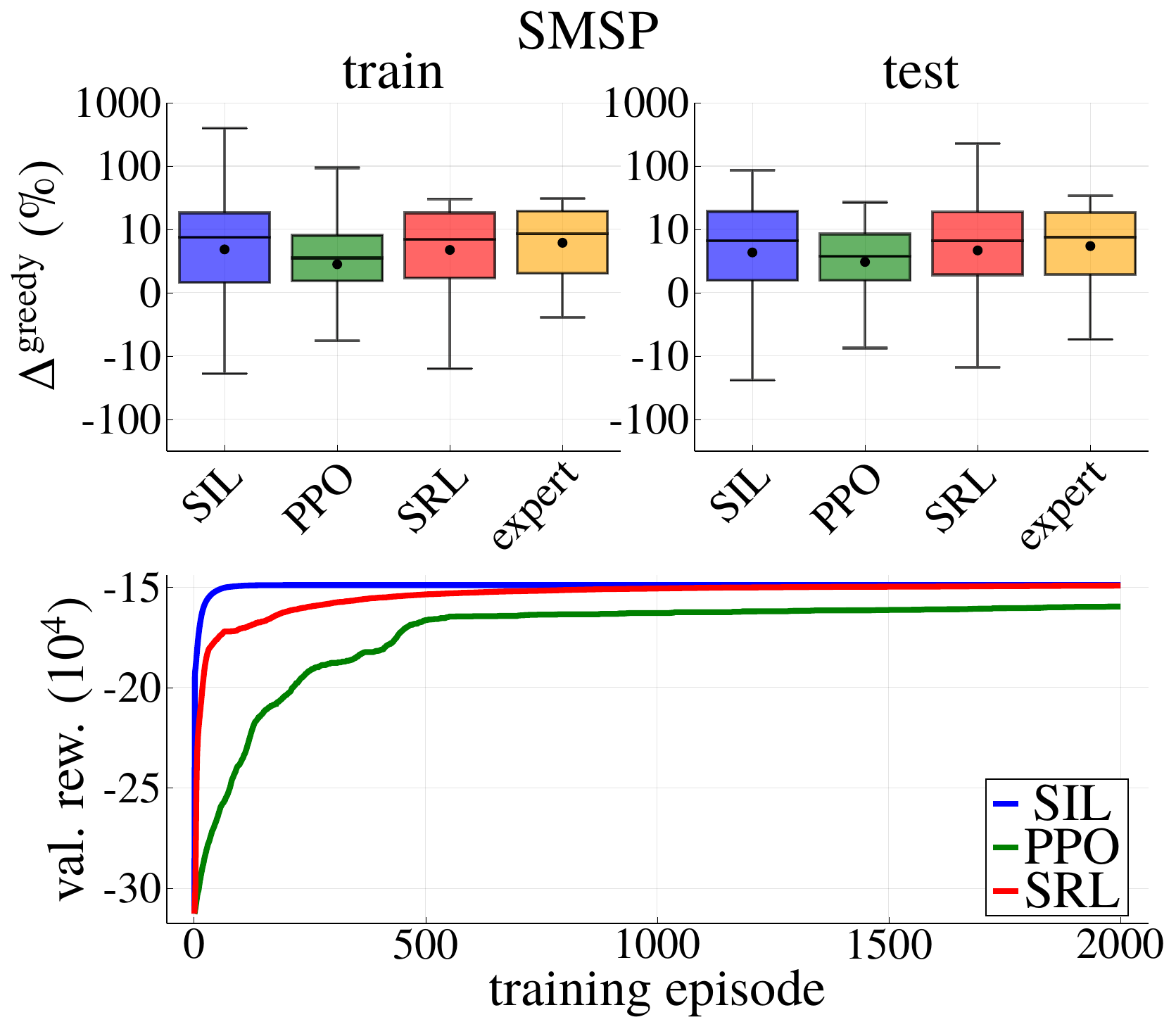}
    \end{minipage}
    \begin{minipage}[t]{0.49\textwidth}
        \includegraphics[width=\textwidth]{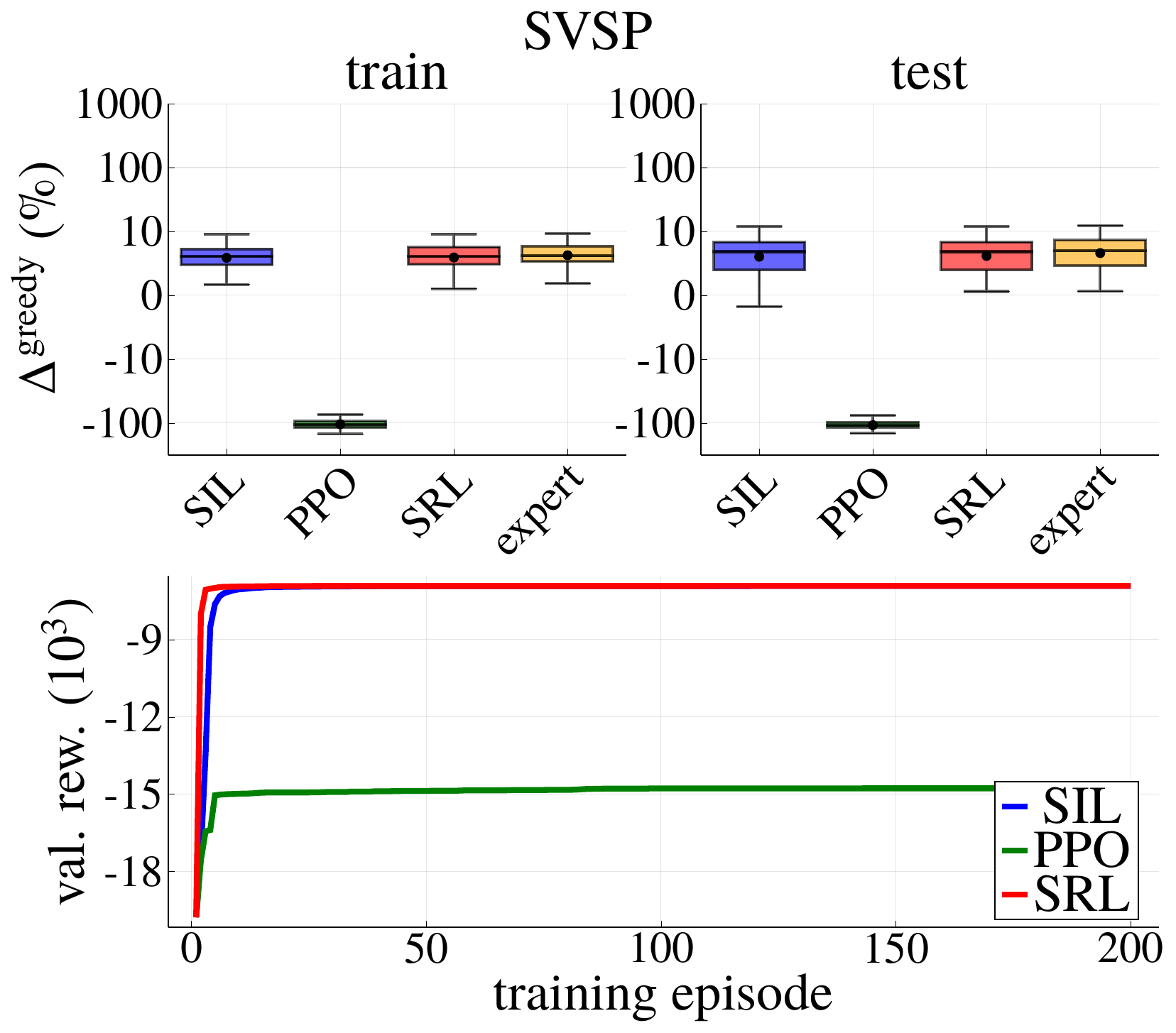}
    \end{minipage}
    \caption{SMSP and SVSP results. Left: final train and test-performance compared to greedy ($\Delta^{\text{greedy}}$); right: validation performance during training; averaged over 10 random model initializations.}
    \label{fig:static_results}
    \vspace{-0.2cm}
\end{figure}

\begin{table}
  \caption{Standard deviation of validation rewards during training and final testing rewards over 10 random model initializations; and training time of algorithms in the WSPP, SMSP, and SVSP.}
  \centering
  \begin{tabular}{rrrrrrrrrr}
    \toprule
    \multirow{2}[2]{*}{Algorithm} & \multicolumn{3}{c}{WSPP} & \multicolumn{3}{c}{SMSP} & \multicolumn{3}{c}{SVSP} \\
    \cmidrule(lr){2-4} \cmidrule(lr){5-7} \cmidrule(lr){8-10}
    & train & test & time & train & test & time & train & test & time \\
    \midrule
    SIL & 0.2 & 0.6 & 7m & 0.0 & 0.0 & 10m & 0.2 & 0.0 & 11m \\
    PPO & 3.8 & 5.6 & 9m & 1.9 & 0.3 & 15m & 10.0 & 10.0 & 3m \\
    SRL & 0.5 & 1.0 & 9m & 0.1 & 0.0 & 12m & 0.1 & 0.0 & 23m \\
    \bottomrule
  \end{tabular}
  \label{tab:static_results}
\end{table}

Figure~\ref{fig:wcsp_results}, Figure~\ref{fig:static_results}, and Table~\ref{tab:static_results} present results for the static environments. These results are generally similar to the results for the dynamic environments. In all environments, SRL performs comparably to SIL, with both algorithms approaching the expert policy. SRL consistently outperforms PPO by at least 5\% and up to 54\%, which converges to near-greedy policies in the WSPP and the SMSP; and which struggles to converge in the SVSP, yielding low average performance. SRL and SIL algorithms converge fast in the SVSP, requiring fewer than 10 episodes to reach a performance plateau. Both algorithms converge slower, but with the same speed, in the WSPP. These results highlight the similarity between SRL and SIL in static environments when SRL is able to sufficiently explore the combinatorial action space. They underscore that SRL is competitive with SIL and serves as a strong alternative in settings where an optimal solution is unavailable. In the SMSP, SRL and PPO converge slower than SIL, requiring ~500 episodes compared to ~100.

Stability measures underline the above results: SRL and SIL show low variance, while PPO is up to 400$\times$ less stable. This underscores the stabilizing effect of structured learning using Fenchel-Young losses. The stability and performance gap is especially pronounced in the SVSP, where stochasticity and highly combinatorial complexity challenge PPO, in contrast to the simpler and deterministic SMSP and WSPP. These combinatorial complexities impact computational effort: while all methods complete their training in 7-9 minutes in the WSPP and in 10–15 minutes in the SMSP, PPO is over 3$\times$ faster than SIL and over 7$\times$ faster than SRL in the SVSP, omitting offline solution time for SIL. Again, these differences stem from CO-layer usage. While negligible in simple settings like the WSPP and the SMSP, this overhead becomes substantial in environments with complicated CO-layers like the SVSP. The WSPP presents an interesting case: although it requires the largest neural network of all considered environments, its simple CO-layer reduces performance differences between algorithms. Overall, these results suggest that SRL yields the greatest performance gains in environments with highly combinatorial structure; but these gains come at the cost of increased computational effort.

\subsection{Numerical results for all environments}

We display all numerical results of the final tests in Table~\ref{tab:static_numerics} for the static environments and in Table~\ref{tab:dynamic_numerics} for the dynamic environments. In all environments except the DAP, costs are to be minimized; the rewards are therefore negative. A higher number, indicating a lower cost, is therefore better. In the DAP, revenues are to be maximized; the rewards are therefore positive. A higher number, indicating higher revenues, is therefore better.

\begin{table}[!h]
    \caption{Final performance of algorithms on the train and test dataset for the WSPP, SMSP, and SVSP. For SIL, PPO, SRL, averaged over 10 random model initializations.}
    \centering
    \begin{tabular}{rrrrrrr}
        \toprule
        \multirow{2}[2]{*}{Algorithm} & \multicolumn{2}{c}{WSPP} & \multicolumn{2}{c}{SMSP} & \multicolumn{2}{c}{SVSP} \\
        \cmidrule(lr){2-3} \cmidrule(lr){4-5} \cmidrule(lr){6-7}
                       & train & test & train & test & train & test \\
        \midrule
        Expert        & -30.4  & -29.8 & -157306 & -152670 & -6885  & -6670 \\
        Greedy         & -43.1  & -43.5 & -175185 & -168738 & -7228  & -7038 \\
        SIL            & -30.4  & -30.5 & -159258 & -154399 & -6907  & -6701 \\
        PPO            & -39.3  & -39.1 & -168790 & -162892 & -14735 & -14610 \\
        SRL            & -30.5  & -30.6 & -159135 & -154388 & -6904  & -6695 \\
        \bottomrule
    \end{tabular}
    \label{tab:static_numerics}
\end{table}

\begin{table}[!h]
    \caption{Final performance of algorithms on the train and test dataset for the DVSP, DAP, and GSPP. For SIL, PPO, SRL, averaged over 10 random model initializations.}
    \centering
    \begin{tabular}{rrrrrrr}
        \toprule
        \multirow{2}[2]{*}{Algorithm} & \multicolumn{2}{c}{DVSP} & \multicolumn{2}{c}{DAP} & \multicolumn{2}{c}{GSPP} \\
        \cmidrule(lr){2-3} \cmidrule(lr){4-5} \cmidrule(lr){6-7}
                   & train & test & train & test & train & test \\
        \midrule
        Expert    & -30.1 & -25.5 & 569.9  & 583.2  & -1293.6 & -1284.9 \\
        Greedy     & -35.1 & -30.0 & 439.6  & 484.1  & -1554.2 & -1574.4 \\
        SIL        & -31.8 & -27.2 & 490.9  & 519.2  & -1257.7 & -1253.5 \\
        PPO        & -37.4 & -32.5 & 308.7  & 313.1  & -3605.0 & -3683.9 \\
        SRL        & -31.9 & -27.3 & 529.8  & 555.3  & -275.5  & -280.2 \\
        \bottomrule
    \end{tabular}
    \label{tab:dynamic_numerics}
\end{table}

\end{document}